\newtheorem{thm}{Theorem}[section]
\newtheorem{lem}[thm]{Lemma}
\newtheorem{prop}[thm]{Proposition}
\title{Contextual Multi-armed Bandit Algorithm for Semiparametric Reward Model}
\author{
  Gi-Soo Kim \\
  Department of Statistics\\
  Seoul National University\\
  Seoul, Korea \\
  \texttt{gisoo1989@snu.ac.kr} \\
   \And
  Myunghee Cho Paik* \\
  Department of Statistics\\
  Seoul National University\\
  Seoul, Korea \\
  \texttt{myungheechopaik@snu.ac.kr} \\
}
\begin{document}
\maketitle

\begin{abstract}
Contextual multi-armed bandit (MAB) algorithms have been shown promising for maximizing cumulative rewards in sequential decision tasks such as news article recommendation systems, web page ad placement algorithms, and mobile health. 
However, most of the proposed contextual MAB algorithms assume linear relationships between the reward and the context of the action. This paper proposes a new contextual MAB algorithm for a relaxed, semiparametric reward model that supports nonstationarity. The proposed method is less restrictive, easier to implement and faster than two alternative algorithms that consider the same model, while achieving a tight regret upper bound. We prove that the high-probability upper bound of the regret incurred by the proposed algorithm has the same order as the Thompson sampling algorithm for linear reward models. 
The proposed and existing algorithms are evaluated via simulation and also applied to Yahoo! news article recommendation log data.
\end{abstract}

\keywords{Contextual multi-armed bandit algorithm \and Thompson sampling \and Semiparametric model}

\section{Introduction}\label{s1}

The multi-armed bandit (MAB) problem \citep{Robbins} formulates the sequential decision problem in which a learner must choose an action among several actions given by the environment at each step so as to maximize the cumulative rewards. The actions are often described as the arms of a bandit slot machine with multiple arms. By choosing an action or pulling an arm, the learner receives possibly different rewards. By repeating the process of pulling arms and receiving rewards, the learner accumulates information about the reward compensation mechanism,  learns from it, and chooses the arm  close to optimal as time elapses. 
Application areas include the mobile healthcare system \citep{Tewari and Murphy}, web page ad placement algorithms \citep{Langford}, news article placement algorithms \citep{Li10}, revenue management \citep{Ferreira}, marketing \citep{Schwartz}, and recommendation systems \citep{Kawale}. 

For example, the Yahoo! web system uses a news article recommendation algorithm to select one article among a large pool of available articles and displays it on the Featured tab of the web page every time a user visits. The user clicks the article if he or she is interested in the contents. The goal of the algorithm is to maximize the cumulative number of user clicks. After each visit, the algorithm reinforces its article selection strategy based on the past user click feedback. In this setting, available articles correspond to different actions and the user click corresponds to a reward. The challenging part of the MAB problem is that the reward of the action that the learner has not previously chosen is forever unknown, i.e., whether the user would have clicked or not remains missing for the article that is not chosen. Therefore, the learner should balance  between ``exploitation", selecting the best action based on information accumulated so far, and ``exploration", choosing an action that will assist in future choices, although it does not seem to be the best option at the moment. 

The MAB problem was first theoretically analyzed by \citet{Lai85}. The algorithms widely used in mobile healthcare systems or  news article placement algorithms are of a more extended form, called contextual MAB algorithms. A contextual MAB algorithm enables at each selection step the use of side information, called context, about each action given in the form of finite-dimensional covariates. For example, in the news article recommendation, information on the visiting user as well as the articles are given in the form of a context vector. In 2010, the Yahoo! team \citep{Li10} proposed a contextual MAB algorithm that achieved a 12.5\% click lift compared to a context-free MAB algorithm. Nonetheless, the method of \citet{Li10} and other existing algorithms rely on rather strong assumptions on the distribution of rewards. In particular, most of the existing algorithms assume that the expectation of the reward of a particular action has a time-invariant linear relationship with the context vector. This assumption can be restrictive in real world settings where the rewards typically {vary with time}.  

In this paper, we propose a novel contextual MAB algorithm which works well under a relaxed assumption on the distribution of rewards. The  relaxed nature of the assumption involves nonstationarity of the reward via an additive intercept term to the original time-invariant linear term. This intercept term changes with time but does not depend on the action. We propose a consistent estimation of the regression parameter in the linear term by centering the context vectors with weights. Using new martingale inequalities, we prove  that the high probability upper bound of the total regret incurred by the proposed algorithm has the same order as the regret bound achieved by the Thompson sampling algorithm which is developed under a more restrictive linear assumption. 

{\citet{Greenewald} and \citet{Krishnamurthy} suggested algorithms under the same nonstationary assumption we considered. The performance of the method by \citet{Greenewald} is guaranteed under restrictive conditions on the action choice probabilities. The method by \citet{Krishnamurthy} takes an action-elimination approach which is computationally heavy as it requires $O(N^2)$ computations at each iteration where $N$ denotes the number of arms. Moreover in \citet{Krishnamurthy}, the action selection distribution is not given explicitly when $N>2$. Our method improves on these previous results in that it does not restrict action choice probabilities, requires $O(N)$ computations, and explicitly provides the action selection distribution for every $N$. Furthermore, the proposed estimator for the regression parameter achieves the same convergence rate as the estimator for linear reward models. }
 
{ As a summary, our main contributions are:
\begin{itemize}
\item We propose a new MAB algorithm for the nonstationary semiparametric reward model. The proposed method is less restrictive, easier to implement and computationally faster than previous works.
\item We prove that the high-probability upper bound of the regret for the proposed method is of the same order as the Thompson Sampling algorithm for linear reward models.
\item {We propose a new estimator for the regression parameter  without requiring an extra tuning parameter and prove that it converges to the true parameter faster than existing estimators.}
\item Simulation studies show that in most cases, the cumulative reward of the proposed method increases faster than existing methods which assume the same nonstationary reward model. Application to Yahoo!  news  article  recommendation  log  data shows that the proposed method increases the user click rate compared to the algorithms that assume a stationary reward model.
\end{itemize}
}

 

\section{Preliminaries}\label{s2}
{In this section, we describe the problem settings and notations. As a preliminary, we also present a review of contextual bandit methods for the comparison purpose with the proposed method given in Section \ref{s4}.}

In the MAB setting, the learner is repeatedly faced with $N$ alternative actions where at time $t$, the $i$-th arm ($i=1,\cdots,N$) yields a random reward $r_i(t)$ with unknown mean $\theta_i(t)$. In the contextual MAB problem, we assume that there is a finite-dimensional context vector $b_i(t)\in \mathbb{R}^d$ associated with each arm $i$ at time $t$ and that the mean of $r_i(t)$ depends on $b_i(t)$, i.e., $\theta_i(t)=\theta_t(b_i(t)),$
where $\theta_t(\cdot)$ is an arbitrary function. 
Among the $N$ arms, the learner pulls one arm $a(t)$, and observes reward $r_{a(t)}(t)$. The optimal arm at time $t$ is $a^*(t):=\underset{1\leq i \leq N}{\mathrm{argmax}}\{\theta_t(b_i(t))\}$. Let $regret(t)$ be the difference between the expected reward of the optimal arm and the expected reward of the arm chosen by the learner at time $t$, i.e., 
\begin{align*}regret(t)&=\mathbb{E}\big(r_{a^*(t)}(t)-r_{a(t)}(t)~\big|~\{b_i(t)\}_{i=1}^N, a(t)~\big)\\&=\theta_t(b_{a^*(t)}(t))-\theta_t(b_{a(t)}(t)).\end{align*}
Then, the goal of the learner is to minimize the sum of regrets over $T$ steps,
$R(T):=\sum_{t=1}^Tregret(t).$

Linear contextual MAB problems specifically assume that $\theta_t(b_i(t))$ is linear in $b_i(t)$, 
\begin{align}\theta_t(b_i(t))=b_i(t)^T\mu,~~~i=1,\cdots,N,\label{linear}\end{align}
where $\mu\in\mathbb{R}^d$ is unknown. 
For the linear contextual MAB problem, \citet{Dani} proved a lower bound of order $\Omega(d\sqrt{T})$ for the regret $R(T)$ when $N$ is allowed to be infinite. When $N$ is finite and $d^2\leq T$ , \citet{Chu11} showed a lower bound of $\Omega(\sqrt{dT})$.


\citet{Auer}, \citet{Li10} and \citet{Chu11} proposed an upper confidence bound (UCB) algorithm for the linear contextual MAB problem. The algorithm selects the arm which has the highest UCB of the reward. Since the UCB reflects the current estimate of the reward as well as its uncertainty, the algorithm naturally balances between exploitation and exploration. The success of the UCB algorithm hinges on a valid upper confidence bound $U_i(t)$ of the $i$-th arm's reward, $b_i(t)^T\mu$. \citet{Li10} and \citet{Chu11} proposed 
$$U_i(t)=b_i(t)^T\hat{\mu}(t)+\alpha s_{t,i},$$
where $\hat{\mu}(t)$ is the regression estimator of $\mu$ at time $t$,
\begin{align}\hat{\mu}(t)=B(t)^{-1}\sum_{\tau=1}^{t-1}b_{a(\tau)}(\tau)r_{a(\tau)}(\tau),\label{muhat}\end{align}
$B(t)=I_d+\sum_{\tau=1}^{t-1}b_{a(\tau)}(\tau)b_{a(\tau)}(\tau)^T$ and $s_{t,i}=\sqrt{b_i(t)^TB(t)^{-1}b_i(t)}.$

Under additional assumptions that the error $\eta_i(t):=r_i(t)-\mathbb{E}(r_i(t)|b_i(t))=r_i(t)-b_i(t)^T\mu$ is $R$-sub-Gaussian for some $R>0$ and that the $L_2$-norms of $b_i(t)$ and $\mu$ are bounded by $1$, \citet{Abbasi-Yadkori} proved that if we set $\alpha=R\sqrt{3d\mathrm{log}({T}/{\delta})}+1$,  $U_i(t)$ is a $(1-\delta)$-probability upper bound of $b_i(t)^T\mu$ for $\forall \delta \in (0,1)$, for all $i=1,\cdots,N$ and $t=1,\cdots,T$. Since the errors $\eta_{a(\tau)}(\tau)$'s of the observed rewards are intercorrelated, \citet{Abbasi-Yadkori} used a concentration inequality for  vector-valued martingales to derive a tight $\alpha$.
Additionally, \citet{Abbasi-Yadkori} proved that with probability at least $1-\delta$, the UCB algorithm achieves, 
\begin{align}R(T)&\leq O\big(d\sqrt{T\mathrm{log}({T}/{\delta})\mathrm{log}(1+{T}/{d})}\big).\label{ucbregret2}\end{align}
The bound (\ref{ucbregret2}) matches the lower bound $\Omega(d\sqrt{T})$ for infinite $N$ by a factor of $\mathrm{log}(T)$. When $N$ is finite, (\ref{ucbregret2}) is slightly higher than the lower bound $\Omega(\sqrt{dT})$ by a factor of $\sqrt{d}\mathrm{log}(T).$
 

As a randomized version of the UCB algorithm, Thompson sampling \citep{Thompson} has been widely used as a simple heuristic based on Bayesian ideas. 
\citet{Agrawal} was the first to propose and analyze the Thompson sampling (TS) algorithm for linear contextual MABs. 

The heuristic of the algorithm is to randomly pull the arm according to the posterior probability that it is the optimal arm. This can be done by sampling $\tilde{\mu}(t)$ from the posterior distribution of $\mu$ at time $t$, and pulling the arm $a(t)=\underset{1\leq i\leq N}{\mathrm{argmax}}~b_i(t)^T\tilde{\mu}(t)$. The posterior distribution $\mathcal{N}(\hat{\mu}(t), v^2B(t)^{-1})$ with $\hat{\mu}(t)$ defined in (\ref{muhat}) is easily derived by assuming a gaussian prior $\mathcal{N}(0_d,v^2I_d)$ on $\mu$ for some $v>0$ and that $r_i(t)$ given $\mu$ follows a gaussian distribution $\mathcal{N}(b_i(t)^T\mu, v^2).$ 


\citet{Agrawal} derived the high-probability upper bound of $R(T)$ for the TS algorithm. This bound does not require the Bayesian framework nor the gaussian assumption for the rewards. Under (\ref{linear}) and $R$-sub-gaussianity of the errors, it can be shown that with probability greater than $1-\delta,$ 
\begin{align}R(T)&\leq O\big(d^{\frac{3}{2}}\sqrt{T\mathrm{log}(Td)\mathrm{log}(T/\delta)}(\sqrt{\mathrm{log}(1+T/d)}+\sqrt{\mathrm{log}(1/\delta)}~)\big).\label{tsregret}
\end{align}
The bound (\ref{tsregret}) matches the bound (\ref{ucbregret2}) by a factor of $\sqrt{d}\sqrt{\mathrm{log}(T)}$, which is the price for randomness. On the other hand, the TS algorithm does not require the {\bf for} loop in the UCB algorithm to compute the $s_{t,i}'s$ for each arm $i$.


Unlike aforementioned linear contextual MABs, adversarial contextual MABs do not impose any assumption on the functional form of $\theta_t(\cdot)$. Hence, the distribution of $r_i(t)$ is allowed to change over time, and it can also change adaptively depending on the history. In fact, we assume that an unknown adversary controls the value of $r_i(t)$ in a way that hampers the learner. In this more relaxed setting though, it is hard to achieve low $regret(t)$ with respect to the best choice $r_{a^*(t)}(t)$. Instead, the learner competes with a predefined, finite set of $K$ policies and the regret is defined with respect to the best policy in that set.   


The EXP4.P algorithm proposed by \citet{Beygelzimer et al.2} achieves $O(\sqrt{TN\mathrm{log}({K}/{\delta})})$ regret upper bound. However, for the new notion of regret to be close to the more conservative definition, $K$ should be as large as possible so as to contain the optimal policy which chooses $a^*(t)$ for every $t$, resulting in larger regret. {Therefore, when a simple parametric or semiparametric assumption is not considered so farfetched, algorithms that exploit this structure can lead to higher rewards.}   

\section{Semiparametric contextual MAB}\label{s3}
\citet{Greenewald} and \citet{Krishnamurthy} considered a middle ground between simple linear contextual MABs and complex adversarial MABs: a semiparametric contextual MAB. {In this section, we formally present the semiparametric contextual MAB problem and related works.}

\subsection{{Semiparametric additive reward model}}
 Hereinafter, we define $\mathcal{H}_{t-1}$ as the history until time $t-1$, i.e., $\mathcal{H}_{t-1}=\{a(\tau),r_{a(\tau)}(\tau), \{b_i(\tau)\}_{i=1}^N, \tau=1,\cdots, t-1\},$  and the filtration $\mathcal{F}_{t-1}$ as the union of $\mathcal{H}_{t-1}$ and the contexts at time $t$, i.e., $\mathcal{F}_{t-1}=\{\mathcal{H}_{t-1}, \{b_i(t)\}_{i=1}^N\}$ for $t=1,\cdots, T.$ 
Given $\mathcal{F}_{t-1}$, we assume that the expectation of the reward $r_i(t)$ can be decomposed into a time-invariant, linear component depending on action $(b_i(t)^T\mu)$ and a nonparametric component depending on time and possibly on $\mathcal{F}_{t-1}$, but not on the action ($\nu(t)$):
\begin{align}\mathbb{E}(r_i(t)|\mathcal{F}_{t-1})=\nu(t)+b_i(t)^T\mu.\label{semipara}\end{align}
In (\ref{semipara}), we do not impose any distributional assumption on $\nu(t)$ except that it is bounded, $|\nu(t)|\leq 1$. If $\nu(t)=0$, the problem is just a linear contextual MAB problem, whereas if $\nu(t)$ depends on the action as well, the reward distribution is completely nonparametric and can be addressed by adversarial MAB algorithms.


In the news article recommendation example, $\nu(t)$ can represent the baseline tendency of the user visiting at time $t$ to click any article in the Featured tab, regardless of the contents of the article. This baseline tendency can change in an unexpected manner, because different users visit at each time and even for the same user, the clicking tendency can change according to the user's mood or schedule, both of which cannot be captured as contextual information.  It is reasonable to assume that given this baseline tendency, the probability that the user clicks an article is linear with respect to context information of the article and the user.

Under (\ref{semipara}), we note that the optimal action $a^*(t)$ at time $t$ does not depend on $\nu(t)$ but only on the value of $\mu$, and the regret does not depend on $\nu(t)$ either: $$regret(t)=b_{a^*(t)}(t)^T\mu-b_{a(t)}(t)^T\mu.$$ However, $\nu(t)$ confounds the estimation of $\mu$. The nature of the bandit problem renders the distinction of $\nu(t)$ from the linear part especially difficult because only one observation is allowed at each time $t$. 
Moreover, under the partially adversarial model (\ref{semipara}), deterministic algorithms such as UCB algorithms turn out to be useless. This is because for deterministic algorithms, $a(t)\in\mathcal{F}_{t-1}$. Hence, if an adversary sets $\nu(t)\in \mathcal{F}_{t-1}$ to be $\nu(t)=-b_{a(t)}(t)^T\mu$, the observed reward is $r_{a(t)}(t)=\eta_{a(t)}(t)$ for all $t=1,\cdots,T$, and the algorithm cannot learn $\mu$. Therefore, we should capitalize on the randomness of action choice.  
 
Besides (\ref{semipara}), we make the usual assumption that given $\mathcal{F}_{t-1}$, the error $\eta_i(t):=r_i(t)-\mathbb{E}(r_i(t)|\mathcal{F}_{t-1})$ is $R$-sub-Gaussian for some $R>0$, i.e., for every $\lambda\in \mathbb{R},$ 
\begin{align}\mathbb{E}[\mathrm{exp}(\lambda\eta_i(t))|\mathcal{F}_{t-1}]\leq \mathrm{exp}({\lambda^2R^2}/{2}).\label{error2}\end{align}
Note that this assumption is satisfied whenever $r_i(t)\in [\nu(t)+b_i(t)^T\mu-R,\nu(t)+b_i(t)^T\mu+R]$. 
Also without loss of generality, we assume \begin{align}||b_i(t)||_2\leq 1,~ ||\mu||_2\leq 1,~ |\nu(t)|\leq 1,\label{bounddness}\end{align}  
where $||\cdot||_p$ denotes the $L_p$-norm.

\subsection{Related Work}


\citet{Greenewald} proposed the action-centered TS algorithm for the new reward model (\ref{semipara}). In their settings, they assumed that the first action is the base action, of which the context vector is $b_1(t)=0_d$ for all $t$. Hence, the expected reward of the base action is $\nu(t)$, which can vary with time and also in a way that depends on the past. 
\citet{Greenewald} followed the basic framework of the randomized, TS algorithm but in two stages. In the first stage, the learner selects one action among the non-base actions in the same way as in TS algorithm using random $\tilde{\mu}(t)$. Let this action be $\bar{a}(t)$. In the second stage, the learner chooses once again between $\bar{a}(t)$ and the base action using the distribution of $\tilde{\mu}(t)$. This finally chosen action is set as $a(t)$ and only this action is actually taken. In the second stage,  the probability of $a(t)=\bar{a}(t)$ is computed using the Gaussian distribution of $\tilde{\mu}(t)$,
$\mathbb{P}(a(t)=\bar{a}(t)|\mathcal{F}_{t-1},\bar{a}(t))=1-\psi\Big(\frac{-b_{\bar{a}(t)}(t)^T\hat{\mu}(t)}{vs_{t,\bar{a}(t)}(t)}\Big),$
where $\psi(\cdot)$ is the CDF of the standard Gaussian distribution. 


Instead of choosing $a(t)=\bar{a}(t)$ with this exact probability however, \citet{Greenewald} constrained the probability of not choosing the base action to lie in a predefined set $[p_{min}, p_{max}]\subset [0,1]$. This is to prevent the policy from converging to a determinisitic policy which can be ineffective in the mobile health setting that the authors considered. Hence, the algorithm selects $a(t)=\bar{a}(t)$ with probability $p_t=\mathrm{max}\Big(p_{min},~\mathrm{min}\Big(1-\psi\Big(\frac{-b_{\bar{a}(t)}(t)^T\hat{\mu}(t)}{vs_{t,\bar{a}(t)}(t)}\Big),~p_{max}\Big)\Big).$ Under this probability constraint, the definition of the optimal policy and $regret(t)$ changes accordingly. Let $\bar{a}^*(t)=\underset{2\leq i\leq N}{\mathrm{argmax}}~b_i(t)^T\mu$. Thus, $\bar{a}^*(t)$ is the optimal action among the non-base actions. Then the optimal policy chooses the action $a^*(t)=\bar{a}^*(t)$ with probability $\pi^*(t):=p_{max}I(b_{\bar{a}^*(t)}(t)^T\mu > 0)+p_{min}I(b_{\bar{a}^*(t)}(t)^T\mu\leq 0)$ and $a^*(t)=1$ with probability $1-\pi^*(t).$

To consistently estimate $\mu$, \citet{Greenewald} defined a pseudo-reward, $\hat{r}_{\bar{a}(t)}(t)=\{I(a(t)=\bar{a}(t))-p_t\}r_{a(t)}(t).$
An important property of the pseudo-reward is that its conditional expectation does not depend on $\nu(t)$. 
\citet{Greenewald} used this pseudo-reward instead of the actual reward $r_{a(t)}(t)$ for estimating $\mu$. They showed that the high probability upper bound of $R(T)$ for the action-centered TS algorithm matches that of the original TS algorithm for linear reward models, but by a constant factor $M=1/\{p_{min}(1-p_{max})\}$. 
{This factor $M$ can be large when we do not want to restrict action selection probabilities, i.e., when we want to set either $p_{min}=0$ or $p_{max}=1$.} 


\citet{Krishnamurthy} proposed the BOSE (Bandit Orthogonalized Semiparametric Estimation) algorithm for the semiparametric reward model (\ref{semipara}). {This algorithm takes  an action elimination method adapted from \citet{Even-Dar}. At each time $t$, an action $i$ is eliminated if there exists another action $j$ such that $\big(b_j(t)-b_i(t)\big)^T\hat{\mu}(t)> \omega \sqrt{(b_i(t)-b_j(t))^TV_t^{-1}(b_i(t)-b_j(t))},$ where $\omega$ is a predefined constant, $
\hat{\mu}(t)$ is an estimate of $\mu$, and $V_t$ is a $d$-dimensional matrix. The algorithm then picks up one action randomly among the survivors according to a particular distribution.}

For estimating $\mu$, \citet{Krishnamurthy} used a centering trick on the context vectors $b_i(t)$'s to cancel out $\nu(t)$. They proposed the following estimator for $\mu$:
\begin{align}\hat{\mu}(t)=\big(\gamma I_d+\sum_{\tau=1}^{t-1}X_{\tau}X_{\tau}^T\big)^{-1}\sum_{\tau=1}^{t-1}X_{\tau}r_{a(\tau)}(\tau),\label{Kmu_hat}\end{align}
where $X_{\tau}=b_{a(\tau)}(\tau)-\mathbb{E}(b_{a(\tau)}(\tau)|\mathcal{F}_{\tau-1})$ and $\gamma>0$. Given $\mathcal{F}_{\tau-1}$, we see that $\mathbb{E}(X_{\tau}|\mathcal{F}_{\tau-1})=0_d.$ Hence, $\{\sum_{\tau=1}^tX_{\tau}\}_{t=1}^{\infty}$ is a vector martingale process adapted to filtration $\{\mathcal{F}_{t}\}_{t=1}^{\infty}$. \citet{Krishnamurthy} derived a ($1-\delta$)-probability upper bound for $b^T(\hat{\mu}(t)-\mu)$ using a new concentration inequality for self-normalized vector-valued martingales established by \citet{delaPena2} and \citet{delaPena}. 

{The BOSE algorithm does not require any constraint on the action choice probabilities but achieves a $O(d\sqrt{T}\mathrm{log}(T/\delta))$ regret bound. This bound matches the best known regret bound (\ref{ucbregret2}) for linear reward models. However, the action elimination step requires $O(N^2)$ computations at each round. Also, the distribution used to select the action should satisfy a specific condition to guarantee the $O(d\sqrt{T}\mathrm{log}(T/\delta))$ regret bound.  {The authors only show that there exists a distribution to satisfy this condition when $N>2$. The construction of such distribution is not a trivial matter since it requires to solve a convex program with $N$ quadratic conditions at every iteration.} Furthermore, the bound of $b^T(\hat{\mu}(t)-\mu)$ is valid under $\gamma\geq 4d\mathrm{log}(9T)+8\mathrm{log}(4T/\delta)$ when $N>2$, which can dominate the denominator term of $\hat{\mu}(t)$ when $t$ is small. For example, when $d=35$ and $T=1900000$ as in the news article recommendation example in Section \ref{s6}, $\gamma\geq 2476.8$ if we take $\delta=0.1$. When $\gamma$ is set to be a tuning parameter, the BOSE algorithm requires in total two tuning parameters, including $\omega$ used in the action elimination step.}

\section{Proposed method}\label{s4}

In this paper, we propose a new algorithm for the semiparametric reward model (\ref{semipara}) which improves on the results of \citet{Greenewald} while keeping the framework of the TS algorithm. { Our method requires only $O(N)$ computations at each round, while \citet{Krishnamurthy} requires $O(N^2)$.  {An action selection distribution for every $N$ is given and does not need to be solved as in \citet{Krishnamurthy}.}  The proposed algorithm uses a new estimator $\hat{\mu}(t)$ for $\mu$ which enjoys a tighter high-probability upper bound {than (\ref{Kmu_hat})} without having to deal with any potentially big constant, $\gamma$.} We prove that the high-probability  upper  bound  of  the  regret $R(T)$ incurred  by  the proposed algorithm  has  the  same  order as  the  TS  algorithm  for  linear  reward  models without the need to restrict action choice probabilities as in \citet{Greenewald}. 

\subsection{Proposed algorithm}

\begin{algorithm}
\caption{Proposed TS algorithm}\label{prop_alg}
\begin{algorithmic}
\State Set $B=I_d$, $y=0_d$, $v=(2R+6)\sqrt{{6}d\mathrm{log}(T/\delta)}$, $\delta\in (0,1).$
\For{$t=1,2,\cdots, T$}
\State Compute $\hat{\mu}(t)=B^{-1}y.$
\State  Sample $\tilde{\mu}(t)$ from distribution $\mathcal{N}(\hat{\mu}(t),v^2B^{-1})$.
\State  Pull arm $a(t)=\underset{1\leq i\leq N}{\mathrm{argmax}}~b_i(t)^T\tilde{\mu}(t)$ and get reward $r_{a(t)}(t).$
\For{$i=1,\cdots,N$}
\State  Compute $\pi_i(t)=\mathbb{P}(a(t)=i|\mathcal{F}_{t-1}).$
\EndFor 
\State  Update $B$ and $y$:
\State  $B\leftarrow B+\big(b_{a(t)}(t)-\bar{b}(t)\big)\big(b_{a(t)}(t)-\bar{b}(t)\big)^T+\sum_{i=1}^N\pi_i(t)\big(b_i(t)-\bar{b}(t)\big)\big(b_i(t)-\bar{b}(t)\big)^T,$ 
\State  $y\leftarrow y+2\big(b_{a(t)}(t)-\bar{b}(t)\big)r_{a(t)}(t).$
\EndFor
\end{algorithmic}
\end{algorithm}

Besides (\ref{semipara}), we make the same assumptions as in Section \ref{s3}, (\ref{error2}) and (\ref{bounddness}). The proposed Algorithm \ref{prop_alg} follows the framework of the TS algorithm with two major modifications: the mean and variance of $\tilde{\mu}(t)$. First, we propose a new estimator $\hat{\mu}(t)$ of $\mu$ for the mean of $\tilde{\mu}(t)$:
\begin{align}\hat{\mu}(t)&=\Big(I_d+\hat{\Sigma}_t+\Sigma_t\Big)^{-1}\sum_{\tau=1}^{t-1}2X_{\tau}r_{a(\tau)}(\tau),\label{newmu_hat}\end{align}
where $\hat{\Sigma}_t\!=\!\sum_{\tau\!=\!1}^{t\!-\!1}X_{\tau}X_{\tau}^T$ and $\Sigma_t\!=\!\sum_{\tau\!=\!1}^{t\!-\!1}\mathbb{E}(X_{\tau}X_{\tau}^T|\mathcal{F}_{\tau-1})$. Compared to (\ref{Kmu_hat}), we note that the proposed estimator stabilizes the denominator using a new term $\Sigma_t$ instead of $\gamma I_d$. {As a result, we do not need an extra tuning parameter.} Hereinafter, let $\bar{b}(\tau)$ denote $\mathbb{E}(b_{a(\tau)}(\tau)|\mathcal{F}_{\tau-1})$ for simplicity. This term can be calculated as $\bar{b}(\tau)=\mathbb{E}\big(\sum_{i=1}^NI(a(\tau)=i)b_i(\tau)\big|\mathcal{F}_{\tau-1}\big)=\sum_{i=1}^N\pi_i(\tau)b_i(\tau),$ where $\pi_i(\tau)=\mathbb{P}(a(\tau)=i|\mathcal{F}_{\tau-1})$ is the probability of pulling the $i$-th arm at time $\tau$, which is determined by the distribution of $\tilde{\mu}(\tau)$. Also, the covariance $\mathbb{E}(X_{\tau}X_{\tau}^T|\mathcal{F}_{\tau-1})$ can be computed as $\mathbb{E}(X_{\tau}X_{\tau}^T|\mathcal{F}_{\tau-1})
=\sum_{i=1}^N\pi_i(\tau)(b_{i}(\tau)-\bar{b}(\tau))(b_{i}(\tau)-\bar{b}(\tau))^T.$
As for the variance of $\tilde{\mu}(t)$, we propose $v^2B(t)^{-1}$, where $v=(2R+6)\sqrt{6d\mathrm{log}(T/\delta)}$ and $B(t)=I_d+\hat{\Sigma}_t+\Sigma_t.$

In the following theorem, we establish a high-probability regret upper bound for the proposed algorithm.
\begin{thm}\label{mainthm} Under (\ref{semipara}), (\ref{error2}), and (\ref{bounddness}), the regret of Algorithm \ref{prop_alg} is bounded as follows. For $\forall \delta\in (0,1)$, with probability $1-\delta$, \begin{align*}R(T)&\leq O\big(d^{3/2}\sqrt{T}\sqrt{\mathrm{log}(Td)\mathrm{log}(T/\delta)}\big(\sqrt{\mathrm{log}(1+T/d)}+\sqrt{\mathrm{log}(1/\delta)}\big)\big).\end{align*}
\end{thm}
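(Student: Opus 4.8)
The plan is to adapt the regret analysis of the linear Thompson sampling algorithm of \citet{Agrawal} to the centered contexts produced by the proposed estimator. The key observation that makes this possible is that both the arm selection and the per-step regret depend on the contexts only through their centered versions: writing $X_i(t):=b_i(t)-\bar b(t)$ and noting that $\bar b(t)^T\tilde\mu(t)$ is common to all arms and $\mathcal{F}_{t-1}$-measurable, we have $a(t)=\mathrm{argmax}_i\, X_i(t)^T\tilde\mu(t)$ and $regret(t)=\big(X_{a^*(t)}(t)-X_{a(t)}(t)\big)^T\mu$. Hence the whole machinery of \citet{Agrawal} can be run with $b_i(t)$ replaced by $X_i(t)$, provided we establish (i) a confidence bound for $\hat\mu(t)$ in the $B(t)^{-1}$-norm and (ii) the usual concentration and anti-concentration properties of the Gaussian sample $\tilde\mu(t)$. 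Throughout, write $s_{t,i}=\|X_i(t)\|_{B(t)^{-1}}$.

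First I would establish the concentration of $\hat\mu(t)$, which is the crux of the proof. Substituting $r_{a(\tau)}(\tau)=\nu(\tau)+b_{a(\tau)}(\tau)^T\mu+\eta_{a(\tau)}(\tau)$ into (\ref{newmu_hat}) and using $b_{a(\tau)}(\tau)=X_\tau+\bar b(\tau)$, a direct computation gives the decomposition
\begin{align*}
B(t)\big(\hat\mu(t)-\mu\big)=\big(\hat\Sigma_t-\Sigma_t\big)\mu-\mu+M_t,
\end{align*}
where $M_t=\sum_{\tau=1}^{t-1}2X_\tau\big(\nu(\tau)+\bar b(\tau)^T\mu+\eta_{a(\tau)}(\tau)\big)$. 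Since $\mathbb{E}(X_\tau\mid\mathcal{F}_{\tau-1})=0$ and $\mathbb{E}(\eta_{a(\tau)}(\tau)\mid\mathcal{F}_{\tau-1})=0$, both $M_t$ and $(\hat\Sigma_t-\Sigma_t)\mu=\sum_\tau\big(X_\tau X_\tau^T-\mathbb{E}(X_\tau X_\tau^T\mid\mathcal{F}_{\tau-1})\big)\mu$ are vector-valued martingales adapted to $\{\mathcal{F}_t\}$, while $\|{-\mu}\|_{B(t)^{-1}}\le\|\mu\|\le1$ is a harmless bias. The remaining term $\|(\hat\Sigma_t-\Sigma_t)\mu+M_t\|_{B(t)^{-1}}$ is exactly a self-normalized martingale normalized by $B(t)=I_d+\hat\Sigma_t+\Sigma_t$. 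Its increments combine a sub-Gaussian factor $\eta_{a(\tau)}(\tau)$ with bounded factors ($\nu(\tau)$, $\bar b(\tau)^T\mu$, and the quadratic martingale piece), so neither the purely empirical normalization of \citet{Abbasi-Yadkori} nor a purely predictable one suffices; the point of including \emph{both} $\hat\Sigma_t$ and $\Sigma_t$ in $B(t)$ is that the mixed quadratic variation admits a supermartingale of the form $\exp\big(\lambda^T D_\tau-c\,\lambda^T(X_\tau X_\tau^T+\mathbb{E}(X_\tau X_\tau^T\mid\mathcal{F}_{\tau-1}))\lambda\big)$. Applying a de la Peña-type pseudo-maximization (method-of-mixtures) argument \citep{delaPena2,delaPena} to this object yields, with probability at least $1-\delta/2$ and simultaneously for all $t\le T$,
\begin{align*}
\big\|(\hat\Sigma_t-\Sigma_t)\mu+M_t\big\|_{B(t)^{-1}}\le l_t:=C(R)\sqrt{d\,\mathrm{log}(T/\delta)},
\end{align*}
so that $|X_i(t)^T(\hat\mu(t)-\mu)|\le (l_t+1)\,s_{t,i}$ for every arm $i$. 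Proving this inequality with the mixed normalization, and thereby eliminating the large constant $\gamma$ of (\ref{Kmu_hat}), is the main obstacle; the constant $(2R+6)\sqrt6$ appearing in $v$ is precisely the one produced by this bound.

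Granting the confidence bound, the rest mirrors \citet{Agrawal}. Because $\tilde\mu(t)\mid\mathcal{F}_{t-1}\sim\mathcal{N}(\hat\mu(t),v^2B(t)^{-1})$, a Gaussian tail bound gives $|X_i(t)^T(\tilde\mu(t)-\hat\mu(t))|\le v\sqrt{d\,\mathrm{log}(Td)}\,s_{t,i}$ on a high-probability event, and a Gaussian anti-concentration bound shows that, conditionally on $\mathcal{F}_{t-1}$, the sampled score $X_{a^*(t)}(t)^T\tilde\mu(t)$ exceeds its mean by at least $v\,s_{t,a^*(t)}$ with constant probability $p>0$. Setting $g_t=v\sqrt{d\,\mathrm{log}(Td)}+l_t+1=O\big(Rd\sqrt{\mathrm{log}(Td)\mathrm{log}(T/\delta)}\big)$ gives $|X_i(t)^T(\tilde\mu(t)-\mu)|\le g_t s_{t,i}$ on the intersection of these events. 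Calling an arm \emph{saturated} when its optimality gap $\Delta_i(t)=\big(X_{a^*(t)}(t)-X_i(t)\big)^T\mu$ exceeds $g_t s_{t,i}$, the anti-concentration step shows the played arm is unsaturated with probability at least $p$, which lets one bound the expected one-step regret by $O(g_t/p)\,\mathbb{E}(s_{t,a(t)}\mid\mathcal{F}_{t-1})$ up to an $O(1/t^2)$ additive term.

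Finally I would convert the per-step bounds into a bound on $R(T)$. Defining the super-martingale whose increments are $regret(t)$ minus its conditional upper bound and applying the Azuma--Hoeffding inequality controls $R(T)$ by $\sum_{t=1}^T O(g_t/p)\,s_{t,a(t)}$ up to an $O\big(\sqrt{T\mathrm{log}(1/\delta)}\big)$ deviation. Since $B(t)$ is updated by $X_{a(t)}(t)X_{a(t)}(t)^T+\mathbb{E}(X_{a(t)}(t)X_{a(t)}(t)^T\mid\mathcal{F}_{t-1})$, the elliptical potential lemma yields $\sum_{t=1}^T\min(1,s_{t,a(t)}^2)\le 2\,\mathrm{log}\big(\det B(T+1)/\det B(1)\big)=O\big(d\,\mathrm{log}(1+T/d)\big)$, and Cauchy--Schwarz gives $\sum_t s_{t,a(t)}\le\sqrt{T\sum_t s_{t,a(t)}^2}=O\big(\sqrt{Td\,\mathrm{log}(1+T/d)}\big)$. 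Multiplying by $g_T=O\big(Rd\sqrt{\mathrm{log}(Td)\mathrm{log}(T/\delta)}\big)$ and collecting the deviation terms from the two high-probability events and the Azuma step produces the stated bound $O\big(d^{3/2}\sqrt T\sqrt{\mathrm{log}(Td)\mathrm{log}(T/\delta)}(\sqrt{\mathrm{log}(1+T/d)}+\sqrt{\mathrm{log}(1/\delta)})\big)$, matching (\ref{tsregret}).
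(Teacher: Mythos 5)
Your proposal is correct and follows essentially the same route as the paper: the same decomposition $B(t)(\hat{\mu}(t)-\mu)=\sum_{\tau}2X_{\tau}\eta_{a(\tau)}(\tau)+\sum_{\tau}2X_{\tau}(\nu(\tau)+\bar{b}(\tau)^T\mu)-\mu+(\hat{\Sigma}_t-\Sigma_t)\mu$, the same self-normalized-martingale bounds with the mixed normalization $I_d+\hat{\Sigma}_t+\Sigma_t$, and the same Agrawal-style saturated/unsaturated analysis with Azuma--Hoeffding and the elliptical potential lemma. The one step you assert rather than establish---that the quadratic-variation martingale $\sum_{\tau}D(\tau)\mu$ admits an exponential supermartingale normalized by $\hat{\Sigma}_t+\Sigma_t$---is exactly the paper's new Lemma \ref{condfor(iii)}, which it proves by applying the Bercu--Touati inequality (Lemma \ref{bercubound}, via Lemma \ref{delaPena09}) to $Y_{\tau}=D(\tau)\mu$ and then using Cauchy--Schwarz to dominate $Y_{\tau}Y_{\tau}^T+\mathbb{E}[Y_{\tau}Y_{\tau}^T|\mathcal{F}_{\tau-1}]$ by $2\big(X_{\tau}X_{\tau}^T+\mathbb{E}[X_{\tau}X_{\tau}^T|\mathcal{F}_{\tau-1}]\big)$; since you correctly identified this as the crux and stated the right form of the required inequality, the plan is sound.
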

\noindent This bound matches the bound (\ref{tsregret}) of the original TS algorithm for linear reward models.
{The proof of Theorem \ref{mainthm} essentially follows the lines of the proof given by \citet{Agrawal} with some modifications. A complete proof is presented in the Appendix. The main contribution of this paper is a new theorem for the first stage, which bounds $|(b_i(t)-\bar{b}(t))^T(\hat{\mu}(t)-\mu)|$ with high probability with respect to the new estimator (\ref{newmu_hat}).} 
\begin{thm}\label{newmuhatbound}Let the event $E^{\hat{\mu}}(t)$ be defined as follows:
$$E^{\hat{\mu}}(t)=\big\{\forall i: |\big(b_i(t)-\bar{b}(t)\big)^T(\hat{\mu}(t)-\mu)|\leq l(t)s_{t,i}^c\big\},$$
where $s_{t,i}^c=\sqrt{\big(b_i(t)-\bar{b}(t)\big)^TB(t)^{-1}\big(b_i(t)-\bar{b}(t)\big)}$ and $l(t)=(2R+6)\sqrt{d\mathrm{log}(6t^3/\delta)}+1$. Then for all $t\geq 1$, for any $0<\delta<1$, $\mathbb{P}(E^{\hat{\mu}}(t))\geq 1-\frac{\delta}{t^2}.$ 
\end{thm}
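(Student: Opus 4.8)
The plan is to expand $\hat{\mu}(t)-\mu$ as a self-normalized martingale in the inner product induced by $B(t)^{-1}$, peel off the factor $s_{t,i}^c$ by Cauchy--Schwarz, and then bound the remaining self-normalized norm by $l(t)$ with the stated probability. First I would substitute the reward model $r_{a(\tau)}(\tau)=\nu(\tau)+b_{a(\tau)}(\tau)^T\mu+\eta_{a(\tau)}(\tau)$ into (\ref{newmu_hat}) and use $b_{a(\tau)}(\tau)=X_\tau+\bar{b}(\tau)$ to split the numerator. Writing $\xi_\tau:=\nu(\tau)+\bar{b}(\tau)^T\mu$, which is $\mathcal{F}_{\tau-1}$-measurable and bounded by $2$ under (\ref{bounddness}), a short computation gives
\begin{align*}
\hat{\mu}(t)-\mu = B(t)^{-1}\Big[\,2\sum_{\tau=1}^{t-1}X_\tau\big(\eta_{a(\tau)}(\tau)+\xi_\tau\big) + \big(\hat{\Sigma}_t-\Sigma_t\big)\mu - \mu\,\Big],
\end{align*}
where the three bracketed pieces are, respectively, a noise/confounding martingale $G_t$, a covariance-fluctuation martingale $H_t=(\hat{\Sigma}_t-\Sigma_t)\mu$, and a deterministic bias $-\mu$; all increments are conditionally mean-zero because $\mathbb{E}(X_\tau|\mathcal{F}_{\tau-1})=0_d$. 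Multiplying on the left by $(b_i(t)-\bar{b}(t))^T$ and applying Cauchy--Schwarz with respect to $B(t)^{-1}$ yields
\begin{align*}
\big|(b_i(t)-\bar{b}(t))^T(\hat{\mu}(t)-\mu)\big| \le s_{t,i}^c\big(\|G_t\|_{B(t)^{-1}}+\|H_t\|_{B(t)^{-1}}+\|\mu\|_{B(t)^{-1}}\big).
\end{align*}
Since $B(t)\succeq I_d$ and $\|\mu\|_2\le 1$, the bias term satisfies $\|\mu\|_{B(t)^{-1}}\le 1$, which supplies the ``$+1$'' in $l(t)$. Because the right-hand factor does not depend on $i$, a single high-probability bound on $\|G_t\|_{B(t)^{-1}}+\|H_t\|_{B(t)^{-1}}$ controls all arms simultaneously, so it suffices to show this sum is at most $(2R+6)\sqrt{d\log(6t^3/\delta)}$ with probability at least $1-\delta/t^2$.

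Next I would bound the two martingale norms separately and allocate the constants $2R$, $4$, and $2$ that make up the $2R+6$ prefactor. For the noise part $2\sum_\tau X_\tau\eta_{a(\tau)}(\tau)$, the vector $X_\tau$ is measurable with respect to $\sigma(\mathcal{F}_{\tau-1},a(\tau))$ and, because the Thompson sample $\tilde{\mu}(\tau)$ is drawn independently of the rewards, $\eta_{a(\tau)}(\tau)$ remains conditionally $R$-sub-Gaussian given this enlarged $\sigma$-field; this places the term squarely in the self-normalized framework of \citet{Abbasi-Yadkori} with observed Gram matrix $I_d+\hat{\Sigma}_t$, and since $B(t)\succeq I_d+\hat{\Sigma}_t$ we may replace the normalizer $I_d+\hat{\Sigma}_t$ by the larger $B(t)$ at no cost (as $A\succeq C$ implies $\|v\|_{A^{-1}}\le\|v\|_{C^{-1}}$). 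The confounding part $2\sum_\tau X_\tau\xi_\tau$ and the fluctuation term $H_t=\sum_\tau\big(X_\tau X_\tau^T-\mathbb{E}(X_\tau X_\tau^T|\mathcal{F}_{\tau-1})\big)\mu$ are the genuinely new pieces: here the randomness sits in the conditionally mean-zero vector $X_\tau$ (resp. the matrix increment $X_\tau X_\tau^T$), so the ``regressor'' is not predictable and the natural self-normalizer is the \emph{predictable} covariation $\Sigma_t$ rather than an observed Gram matrix. For these I would invoke a new self-normalized inequality normalizing by $I_d+\Sigma_t$ and exploiting $\|X_\tau\|_2\le 2$, $|\xi_\tau|\le 2$, and $\|\mu\|_2\le 1$; again $B(t)\succeq I_d+\Sigma_t$ lets me pass to the common normalizer $B(t)$. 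The determinant factor appearing in every such bound is controlled by $\det B(t)=\det(I_d+\hat{\Sigma}_t+\Sigma_t)\le(1+8(t-1))^d$, which follows from $\|X_\tau\|_2\le 2$, so that each squared self-normalized norm is $O(d\log t+\log(1/\delta'))$.

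The step I expect to be the main obstacle is establishing the self-normalized inequality for the non-predictable pieces, namely the confounding part of $G_t$ and the term $H_t$. The classical martingale argument of \citet{Abbasi-Yadkori} assumes a predictable regressor times an external sub-Gaussian scalar, whereas here the stochasticity is carried by $X_\tau$ itself and by the centered outer product $X_\tau X_\tau^T-\mathbb{E}(X_\tau X_\tau^T|\mathcal{F}_{\tau-1})$. I would prove the required bound by the pseudo-maximization (method-of-mixtures) technique of \citet{delaPena}, building an exponential supermartingale from the conditional sub-Gaussianity of the \emph{bounded} increments and normalizing by the predictable quadratic variation $\Sigma_t$; this is precisely the feature that allows $\Sigma_t$ to enter $B(t)$ and removes any need for the large additive constant $\gamma$ that appears in (\ref{Kmu_hat}). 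Finally, I would split the target failure probability across the (at most three) concentration events, setting each per-event level to a constant multiple of $\delta/t^2$; combining $\log(t^2/\delta)$ from the confidence level with the $d\log t$ determinant term and absorbing numerical constants yields the stated $\sqrt{d\log(6t^3/\delta)}$ and the overall probability $1-\delta/t^2$.
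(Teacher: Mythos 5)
Your decomposition of $\hat{\mu}(t)-\mu$ into the noise martingale, the confounding martingale, the covariance-fluctuation martingale $H_t=(\hat{\Sigma}_t-\Sigma_t)\mu$, and the bias $-\mu$ is exactly the paper's, as are the Cauchy--Schwarz step, the $\|\mu\|_{B(t)^{-1}}\le 1$ bound, the treatment of the noise term via \citet{Abbasi-Yadkori}, the three-way union bound at level $\delta/(3t^2)$, and the determinant--trace finish. The gap is in the step you yourself flag as the main obstacle. You propose to handle the confounding term $2\sum_\tau X_\tau\xi_\tau$ and the term $H_t$ by an exponential supermartingale normalized by the \emph{predictable} quadratic variation $I_d+\Sigma_t$, justified by ``conditional sub-Gaussianity of the bounded increments.'' That justification does not go through: a bounded, conditionally centered increment is sub-Gaussian only with variance proxy of order its squared range (Hoeffding's lemma gives $\mathbb{E}[\exp(\lambda^TX_\tau c_\tau)|\mathcal{F}_{\tau-1}]\le \exp(2\|\lambda\|_2^2\,c_\tau^2)$, say), \emph{not} with proxy equal to its conditional covariance $\mathbb{E}(X_\tau X_\tau^T|\mathcal{F}_{\tau-1})$. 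Here $X_\tau$ is a centered discrete vector supported on $\{b_i(\tau)-\bar{b}(\tau)\}_i$; when some $\pi_i(\tau)$ is tiny its conditional variance in that direction is tiny while its range is not, so $\mathbb{E}[\exp(\lambda^TX_\tau c_\tau-\tfrac12\lambda^T\mathbb{E}(X_\tau X_\tau^Tc_\tau^2|\mathcal{F}_{\tau-1})\lambda)|\mathcal{F}_{\tau-1}]\le 1$ is simply false in general. Falling back on the range-based proxy gives a normalizer of order $tI_d$, which destroys the bound.

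The inequality that actually works --- and the reason the estimator's Gram matrix is $I_d+\hat{\Sigma}_t+\Sigma_t$ rather than $I_d+\Sigma_t$ --- normalizes by the \emph{sum} of the empirical and predictable quadratic variations. This is Lemma 7 of \citet{delaPena2}, which the paper derives from the elementary inequality $\mathbb{E}[\exp(x-\tfrac12x^2-\tfrac12\sigma^2)]\le 1$ of \citet{Bercu}: one gets $\mathbb{E}[\exp\{\lambda^T\sum_\tau X_\tau c_\tau-\tfrac12\lambda^T(\hat{\Sigma}_t+\Sigma_t)\lambda\}]\le 1$, which is exactly condition (\ref{keycond}) with $Q=I_d$ and $A(t)=\hat{\Sigma}_t+\Sigma_t$, so no monotonicity step is needed. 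For $H_t$ there is a further piece of work you have not supplied: the increments $Y_\tau=D(\tau)\mu$ have their own quadratic variation $Y_\tau Y_\tau^T+\mathbb{E}[Y_\tau Y_\tau^T|\mathcal{F}_{\tau-1}]$, and one must show (the paper's Lemma \ref{condfor(iii)}, via two applications of Cauchy--Schwarz and the identity $\mathbb{E}[D(\tau)^2|\mathcal{F}_{\tau-1}]=\mathbb{E}[L^2|\mathcal{F}_{\tau-1}]-K^2$ with $L=X_\tau X_\tau^T$, $K=\mathbb{E}[L|\mathcal{F}_{\tau-1}]$) that it is dominated by $2(X_\tau X_\tau^T+\mathbb{E}[X_\tau X_\tau^T|\mathcal{F}_{\tau-1}])$, whence the $1/\sqrt{2}$ rescaling of $Y_\tau$ brings this term too under the common normalizer $B(t)$. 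Without these two ingredients the bounds on $C_2$ and $C_3$, and hence the $(2R+6)$ constant and the theorem, are not established.
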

 
\subsection{A sketch of proof for Theorem \ref{newmuhatbound}}

 By decomposition of $(\hat{\mu}(t)-\mu)$, 
 \begin{align*}\hat{\mu}(t)-\mu
&=B(t)^{-1}\sum_{\tau=1}^{t-1}2X_{\tau}r_{a(\tau)}(\tau)-\mu\nonumber\\
&=B(t)^{-1}\big\{\sum_{\tau=1}^{t-1}2X_{\tau}\eta_{a(\tau)}(\tau)+\sum_{\tau=1}^{t-1}2X_{\tau}\big(\nu(\tau)+{\bar{b}(\tau)^T\mu}\big)-\mu+\sum_{\tau=1}^{t-1}D(\tau)\mu\big\},\end{align*}
where $D(\tau)=X_{\tau}X_{\tau}^T-\mathbb{E}\big(X_{\tau}X_{\tau}^T|\mathcal{F}_{\tau-1}\big)$.
Let $b_i^c(t):=b_i(t)-\bar{b}(t)$. Hereinafter, we define $||x||_A:=\sqrt{x^TAx}$ for any $d$-dimensional vector $x$ and any $d\times d$ matrix $A$. By Cauchy-Schwarz inequality,
\begin{align}
\big|b_i^c(t)^T(\hat{\mu}(t)-\mu)\big|
&\leq s_{t,i}^c\{2C_1+2C_2+C_3+C_4\} \label{wrapup}, 
\end{align}
where
\begin{align*}
C_1&=\big|\big|\sum_{\tau=1}^{t-1}X_{\tau}\eta_{a(\tau)}(\tau)\big|\big|_{B(t)^{-1}},\\
C_2&=\big|\big|\sum_{\tau=1}^{t-1}X_{\tau}\big(\nu(\tau)+\bar{b}(\tau)^T\mu\big)\big|\big|_{B(t)^{-1}},\\
C_3&=\big|\big|\sum_{\tau=1}^{t-1}D(\tau)\mu\big|\big|_{B(t)^{-1}},
\end{align*}
and $C_4=\big|\big|\mu\big|\big|_{B(t)^{-1}}$.
First, we have $C_4\leq 1.$ Now we need to bound $C_1,$ $C_2,$ and $C_3$. {First, the term $C_1$ is a familiar term, which we can bound using the technique of \citet{Abbasi-Yadkori}.}
Since $\eta_{a(\tau)}(\tau)$ is R-sub-gaussian given $\mathcal{F}_{\tau-1}$ and $a(\tau)$ while $X_{\tau}$ is fixed given $\mathcal{F}_{\tau-1}$ and $a(\tau)$, we have for any $\lambda \in \mathbb{R}^d,$
\begin{align}
\mathbb{E}\Big[\mathrm{exp}\Big(\frac{\eta_{a(\tau)}(\tau)}{R}\lambda^TX_{\tau}-\frac{1}{2}\lambda^TX_{\tau}X_{\tau}^T\lambda\Big)\Big|\mathcal{F}_{\tau-1}, a(\tau)\Big] &\leq 1.\nonumber
\end{align}
Then it follows,\begin{align} \mathbb{E}\Big[\mathrm{exp}\Big(\lambda^T\sum_{\tau=1}^{t-1}\frac{\eta_{a(\tau)}(\tau)}{R}X_{\tau}-\frac{1}{2}\lambda^T\hat{\Sigma}_t\lambda\Big)\Big] &\leq 1.\label{condfor(i)}
\end{align}
From (\ref{condfor(i)}), we can apply the following lemma, which is a simplified version of the Corollary 4.3 of \citet{delaPena}.
\begin{lem}\label{delaPena04}
Let $X_{\tau}\in \mathbb{R}^d$ and $c_{\tau}\in \mathbb{R}$ be some random variables, $\tau=1,\cdots,t$. Suppose $\exists d\times d$ positive semi-definite matrix $A(t)$ such that for any $\lambda\in \mathbb{R}^d,$
\begin{equation}\mathbb{E}\Big[\mathrm{exp}\Big\{\lambda^T\sum_{\tau=1}^tX_{\tau}c_{\tau}-\frac{1}{2}\lambda^TA(t)\lambda\Big\}\Big]\leq 1.\label{keycond} \end{equation}
Then for any $0<\delta<1$ and any positive definite matrix $Q$, with probability at least $1-\delta,$
$${\Big|\Big|\sum_{\tau=1}^tX_{\tau}c_{\tau}\Big|\Big|_{(Q+A(t))^{-1}}^2} \leq {\mathrm{log}\Big(\frac{det\big(Q+A(t)\big)/det\big(Q\big)}{\delta^2}\Big)}.$$
\end{lem}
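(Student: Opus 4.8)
The plan is to prove this via the \emph{method of mixtures} (pseudo-maximization), the technique underlying the self-normalized bounds of \citet{Abbasi-Yadkori} and \citet{delaPena}. Write $S:=\sum_{\tau=1}^t X_\tau c_\tau$ and abbreviate $A:=A(t)$. The motivating observation is that maximizing the exponent $\lambda^T S-\tfrac12\lambda^T A\lambda$ over $\lambda$ would already produce the desired self-normalized quadratic form $\tfrac12\|S\|_{A^{-1}}^2$, but the supremum over $\lambda$ of the quantities $M_\lambda:=\exp(\lambda^T S-\tfrac12\lambda^T A\lambda)$ need not satisfy an $\mathbb{E}[\cdot]\le 1$ bound even though each $M_\lambda$ does by (\ref{keycond}). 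Instead I would average $M_\lambda$ against a fixed mean-zero Gaussian ``prior'' $\lambda\sim\mathcal{N}(0,Q^{-1})$ with density $h(\lambda)=(2\pi)^{-d/2}(\det Q)^{1/2}\exp(-\tfrac12\lambda^T Q\lambda)$, and define the mixture
\[
\bar M:=\int_{\mathbb{R}^d}\exp\!\Big(\lambda^T S-\tfrac12\lambda^T A\lambda\Big)\,h(\lambda)\,d\lambda .
\]

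I would then carry out three steps. \emph{(i)} Since the integrand is nonnegative, Tonelli's theorem permits exchanging expectation and integral, so applying hypothesis (\ref{keycond}) for each fixed $\lambda$ gives $\mathbb{E}[\bar M]=\int \mathbb{E}[M_\lambda]\,h(\lambda)\,d\lambda\le\int h(\lambda)\,d\lambda=1$. \emph{(ii)} For each realization of the data the exponent equals $\lambda^T S-\tfrac12\lambda^T(Q+A)\lambda$; since $Q\succ 0$ and $A\succeq 0$ the matrix $Q+A$ is positive definite, so completing the square and evaluating the resulting Gaussian integral yields the closed form
\[
\bar M=\Big(\frac{\det Q}{\det(Q+A)}\Big)^{1/2}\exp\!\Big(\tfrac12\big\|S\big\|_{(Q+A)^{-1}}^2\Big).
\]
This computation is purely algebraic and holds pointwise, hence remains valid even when $S$ and $A$ are random. \emph{(iii)} By Markov's inequality together with step (i), $\mathbb{P}(\bar M\ge 1/\delta)\le\delta\,\mathbb{E}[\bar M]\le\delta$; on the complementary event of probability at least $1-\delta$ one has $\bar M<1/\delta$, so taking logarithms of the closed form from step (ii) and rearranging gives
\[
\big\|S\big\|_{(Q+A)^{-1}}^2<2\log(1/\delta)+\log\frac{\det(Q+A)}{\det Q}=\log\Big(\frac{\det(Q+A)/\det Q}{\delta^2}\Big),
\]
which is exactly the claimed inequality.

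The main subtlety, rather than a genuine obstacle, is that $A=A(t)$ is random in the intended application (there $A(t)=\hat\Sigma_t$, as in (\ref{condfor(i)})), so one cannot simply optimize the exponent over $\lambda$ without destroying the supermartingale structure that underlies (\ref{keycond}). The mixture construction is precisely what circumvents this: averaging against a \emph{fixed} prior preserves the $\mathbb{E}[\cdot]\le 1$ bound in step (i), while the Gaussian integral in step (ii) reconstructs the self-normalized quadratic form up to the determinant ``volume'' factor $\det(Q+A)/\det Q$. The only points requiring care are the applicability of Tonelli's theorem (immediate from nonnegativity of $M_\lambda$) and the positive-definiteness of $Q+A$ guaranteeing convergence of the Gaussian integral; both hold automatically under the stated assumptions.
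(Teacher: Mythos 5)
Your proof is correct, and it is essentially the canonical argument behind the result: the paper itself gives no proof of Lemma 4.3, citing it as a simplified form of Corollary 4.3 of de la Pe\~na et al. (2004), whose proof is exactly your method of mixtures (fixed Gaussian prior $\mathcal{N}(0,Q^{-1})$, Tonelli to exchange expectation and integral, completing the square, then Markov). The one point worth flagging is the one you already flag yourself --- the mixing matrix $Q$ must be deterministic for step \emph{(i)} to go through --- which is consistent with the lemma as stated, though it is worth noting that the paper's application to the term $C_1$ takes $Q=I_d+\Sigma_t$, a random matrix, so that use leans on the fuller strength of the cited de la Pe\~na result rather than on the fixed-$Q$ version you (correctly) prove here.
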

Taking $c_{\tau}=\frac{1}{R}\eta_{a(\tau)}(\tau)$, $Q=I_d+\Sigma_t,$ and $A(t)=\hat{\Sigma}_t,$ we see that (\ref{condfor(i)}) corresponds to the condition (\ref{keycond}) of the lemma. Also, 
$C_1= R\Big|\Big|\sum_{\tau=1}^{t-1}X_{\tau}c_{\tau}\Big|\Big|_{(Q+A(t))^{-1}}. $
Therefore by Lemma \ref{delaPena04}, for any $0<\delta<1$, with probability at least $1-\frac{\delta}{3t^2},$
\begin{align}
C_1
&\leq R\sqrt{\mathrm{log}\Big(\frac{det(Q+A(t))/det(Q)}{(\delta/(3t^2))^2}\Big)} \nonumber \\
&\leq R\sqrt{\mathrm{log}\Big(\frac{det(Q+A(t))}{(\delta/(3t^2))^2}\Big)}= R\sqrt{\mathrm{log}\Big(\frac{det(B(t))}{(\delta/(3t^2))^2}\Big)}. \label{1stbound}
\end{align}
Now, we need to bound $C_2$ and $C_3$, which are terms that arise due to the $\nu(\tau)$'s and the use of a new estimator (\ref{newmu_hat}). Although $C_2$ looks similar to $C_1$, the term $(\nu(\tau)+\bar{b}(\tau)^T\mu)$ is not sub-Gaussian, so we can no longer use the technique of \citet{Abbasi-Yadkori}. Instead, we have $\mathbb{E}[X_{\tau}|\mathcal{F}_{\tau-1}]=0.$ To bound a similar term to $C_2$, \citet{Krishnamurthy} proposed to use Lemma 7 of \citet{delaPena2} for vector-valued martingales to derive an inequality analogous to (\ref{condfor(i)}). 
Using Lemma 7 of \citet{delaPena2}, we can prove that
for any $\lambda\in\mathbb{R}^d,$
\begin{equation}\mathbb{E}\Big[\mathrm{exp}\Big\{\lambda^T\sum_{\tau=1}^{t-1}X_{\tau}c_{\tau}-\frac{1}{2}\lambda^T\big(\hat{\Sigma}_t+\Sigma_t\big)\lambda\Big\}\Big]\leq 1.\label{condfor(ii)}\end{equation}
where $c_{\tau}=\Big(\frac{\nu(\tau)+\bar{b}(\tau)^T\mu}{2}\Big)$. 
Taking $A(t)=\hat{\Sigma}_t+\Sigma_t$ and $Q=I_d$, (\ref{condfor(ii)}) corresponds to condition (\ref{keycond}). Also, 
$C_2=2\Big|\Big|\sum_{\tau=1}^{t-1}X_{\tau}c_{\tau}\Big|\Big|_{(Q+A(t))^{-1}}.$
Hence by Lemma \ref{delaPena04}, for any $0<\delta<1$, with probability at least $1-\frac{\delta}{3t^2},$ 
\begin{align}
C_2
&\leq 2\sqrt{\mathrm{log}\big({det(B(t))}/{(\delta/3t^2)^2}\big)}. \label{2ndbound}
\end{align}
{The final step is to bound $C_3$. However, $C_3$ does not take the form $||\sum X_{\tau}c_{\tau}||_{B(t)^{-1}}$, so we require additional work.}
Let $Y_{\tau}=D(\tau)\mu.$ Then, note that $Y_{\tau}\in \mathbb{R}^d$ and $\mathbb{E}\big[Y_{\tau}|\mathcal{F}_{\tau-1}\big]=0.$ We propose the following lemma.
\begin{lem} \label{condfor(iii)}
\begin{equation*}
\mathbb{E}\Big[\mathrm{exp}\Big\{\lambda^T\sum_{\tau=1}^{t-1}\frac{1}{\sqrt{2}}Y_{\tau}-\frac{1}{2}\lambda^T\big(\hat{\Sigma}_t+\Sigma_t\big)\lambda\Big\}\Big]\leq 1.
\end{equation*}
\end{lem}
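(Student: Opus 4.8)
\subsection*{Proof proposal for Lemma \ref{condfor(iii)}}

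The plan is to reduce the product-form bound to a per-step conditional estimate and then chain, paralleling the way (\ref{condfor(ii)}) was obtained from Lemma 7 of \citet{delaPena2}. First note that $Y_\tau=D(\tau)\mu$ is a martingale difference adapted to $\{\mathcal{F}_\tau\}$: since $\mathbb{E}(D(\tau)|\mathcal{F}_{\tau-1})=0$ we have $\mathbb{E}(Y_\tau|\mathcal{F}_{\tau-1})=0$. Writing the target exponential as a telescoping product, the process $M_s=\exp\{\tfrac{1}{\sqrt2}\lambda^T\sum_{\tau\le s}Y_\tau-\tfrac12\lambda^T\sum_{\tau\le s}(X_\tau X_\tau^T+\mathbb{E}(X_\tau X_\tau^T|\mathcal{F}_{\tau-1}))\lambda\}$ is a supermartingale with $M_0=1$, provided I can establish, for every $\tau$, the single-step conditional inequality
\begin{equation*}
\mathbb{E}\Big[\exp\Big\{\tfrac{1}{\sqrt{2}}\lambda^T Y_\tau-\tfrac{1}{2}\lambda^T\big(X_\tau X_\tau^T+\mathbb{E}(X_\tau X_\tau^T|\mathcal{F}_{\tau-1})\big)\lambda\Big\}\Big|\mathcal{F}_{\tau-1}\Big]\leq 1 .
\end{equation*}
Summing the per-step compensators over $\tau=1,\dots,t-1$ reproduces $\tfrac12\lambda^T(\hat{\Sigma}_t+\Sigma_t)\lambda$, so $\mathbb{E}[M_{t-1}]\le 1$ is exactly the claim.

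For the single-step bound I would introduce the scalars $a=\lambda^T X_\tau$ and $g=\mu^T X_\tau$. Then $\lambda^T Y_\tau=ag-\mathbb{E}(ag|\mathcal{F}_{\tau-1})$ is conditionally mean-zero, $\lambda^T X_\tau X_\tau^T\lambda=a^2$, and $\lambda^T\mathbb{E}(X_\tau X_\tau^T|\mathcal{F}_{\tau-1})\lambda=\mathbb{E}(a^2|\mathcal{F}_{\tau-1})$, so the inequality to be shown becomes $\mathbb{E}[\exp\{\tfrac{1}{\sqrt2}(ag-\mathbb{E}(ag))-\tfrac12(a^2+\mathbb{E}(a^2))\}|\mathcal{F}_{\tau-1}]\le 1$. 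The structural facts I would exploit are that $g$ is conditionally mean-zero with $|g|=|\mu^T X_\tau|\le\|\mu\|_2\|X_\tau\|_2\le 2$ by (\ref{bounddness}), and that $\tfrac{1}{\sqrt2}ag-\tfrac12 a^2=-\tfrac12(a-g/\sqrt2)^2+g^2/4\le 1$. I would feed the centered increment $\tfrac{1}{\sqrt2}\lambda^T Y_\tau$ into the one-step self-normalized exponential inequality underlying Lemma 7 of \citet{delaPena2} (the same engine used for $C_2$), which for a conditionally mean-zero $u$ yields $\mathbb{E}[\exp(u-\tfrac12 u^2)|\mathcal{F}_{\tau-1}]\le\exp(\tfrac12\mathbb{E}(u^2|\mathcal{F}_{\tau-1}))$, and then use $|g|\le 2$ together with the calibration factor $\tfrac{1}{\sqrt2}$ to absorb the resulting self-normalizing term into $\tfrac12(a^2+\mathbb{E}(a^2|\mathcal{F}_{\tau-1}))$.

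The main obstacle is precisely this absorption, which is what distinguishes $C_3$ from $C_1$ and $C_2$. The increment $\lambda^T Y_\tau=ag-\mathbb{E}(ag|\mathcal{F}_{\tau-1})$ is a \emph{centered product} of two linear forms; it is neither uniformly bounded nor sub-Gaussian in $\lambda$, since both $a^2$ and $(ag)^2$ grow without bound as $\|\lambda\|_2\to\infty$, so the Taylor- or Hoeffding-type arguments available for bounded increments cannot be used and one genuinely needs the self-normalized machinery. More seriously, the centering constant $\mathbb{E}(ag|\mathcal{F}_{\tau-1})$ destroys any \emph{pathwise} domination of quadratic variations: the inequality $Y_\tau Y_\tau^T+\mathbb{E}(Y_\tau Y_\tau^T|\mathcal{F}_{\tau-1})\preceq 2\,(X_\tau X_\tau^T+\mathbb{E}(X_\tau X_\tau^T|\mathcal{F}_{\tau-1}))$ already fails on the event $a=0$, where the left side is of order $\mathbb{E}((ag)^2|\mathcal{F}_{\tau-1})$ (as large as $4\,\mathbb{E}(a^2|\mathcal{F}_{\tau-1})$) while the right side only sees $2\,\mathbb{E}(a^2|\mathcal{F}_{\tau-1})$.

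Consequently the comparison cannot be performed inside the exponent realization-by-realization; it must survive only after the conditional expectation is taken, and the factor $\tfrac{1}{\sqrt2}$ is chosen so that the conditional moment generating function, rather than the random quadratic form, closes the bound. I therefore expect the careful verification of this conditional-expectation-level estimate---checking that the constant works out to the compensator $\tfrac12(\hat{\Sigma}_t+\Sigma_t)$ and not a larger multiple---to be the technical heart of the argument, with the reduction in the first paragraph and the telescoping in the last being routine.
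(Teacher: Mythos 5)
Your reduction to a one-step conditional inequality plus telescoping is sound and is in fact how the paper's own engine operates: Lemma \ref{delaPena09} is itself obtained one step at a time from Lemma \ref{bercubound}, and the paper simply applies Lemma \ref{delaPena09} to the bounded martingale differences $\tfrac{1}{\sqrt2}Y_\tau$ with $c_\tau\equiv1$, which yields the exponential bound with compensator $\tfrac14\sum_\tau\big(Y_\tau Y_\tau^T+\mathbb{E}[Y_\tau Y_\tau^T|\mathcal{F}_{\tau-1}]\big)$, and then dominates this compensator by $\tfrac12(\hat\Sigma_t+\Sigma_t)$. The genuine gap in your proposal is that you stop exactly at this domination step: you assert that the pathwise inequality $Y_\tau Y_\tau^T+\mathbb{E}[Y_\tau Y_\tau^T|\mathcal{F}_{\tau-1}]\preceq 2\big(X_\tau X_\tau^T+\mathbb{E}[X_\tau X_\tau^T|\mathcal{F}_{\tau-1}]\big)$ fails and that some unspecified ``conditional-expectation-level'' estimate must replace it, but you never produce that estimate. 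Worse, your own plan (apply the Bercu--Touati bound to $u=\tfrac{1}{\sqrt2}\lambda^TY_\tau$, then ``absorb'' $\tfrac12u^2+\tfrac12\mathbb{E}[u^2|\mathcal{F}_{\tau-1}]$ into $\tfrac12(a^2+\mathbb{E}[a^2|\mathcal{F}_{\tau-1}])$) requires precisely the pathwise domination you declare impossible, so as written the argument is incomplete and internally inconsistent at its key step.

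The diagnosis is also off: the pathwise domination does hold, because of a cancellation that a term-by-term comparison misses. Writing $L=X_\tau X_\tau^T$, $K=\mathbb{E}[L|\mathcal{F}_{\tau-1}]$ and $D(\tau)=L-K$, Cauchy--Schwarz with $\|\mu\|_2\le1$ gives $Y_\tau Y_\tau^T\preceq D(\tau)^2\preceq 2L^2+2K^2$, while $\mathbb{E}[Y_\tau Y_\tau^T|\mathcal{F}_{\tau-1}]\preceq\mathbb{E}[D(\tau)^2|\mathcal{F}_{\tau-1}]=\mathbb{E}[L^2|\mathcal{F}_{\tau-1}]-K^2\preceq 2\mathbb{E}[L^2|\mathcal{F}_{\tau-1}]-2K^2$; adding the two, the $\pm2K^2$ terms cancel and one is left with $2\big(L^2+\mathbb{E}[L^2|\mathcal{F}_{\tau-1}]\big)$, which is then reduced to $2(L+K)$ using $L^2=\|X_\tau\|_2^2L$. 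Your purported counterexample on the event $a=\lambda^TX_\tau=0$ compares the left side to $2(L+K)$ directly and therefore never sees this cancellation; against the correct intermediate bound, the left side on that event equals $\mathbb{E}[(ag)^2|\mathcal{F}_{\tau-1}]\le\mathbb{E}[a^2\|X_\tau\|_2^2|\mathcal{F}_{\tau-1}]$, which is at most half of $2\mathbb{E}[L^2|\mathcal{F}_{\tau-1}]$ evaluated at $\lambda$. The factor $1/\sqrt2$ exists only to absorb the factor $2$ from $(L-K)^2\preceq2L^2+2K^2$, not to make a conditional moment generating function close. (You are right that (\ref{bounddness}) only guarantees $\|X_\tau\|_2\le2$ while the paper's last step uses $\|X_\tau\|_2\le1$; that is a constant-level issue, not a structural obstruction of the kind you describe.)
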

The proof of Lemma \ref{condfor(iii)} is presented in the Appendix.
Taking $A(t)=\hat{\Sigma}_t+\Sigma_t$ and $Q=I_d$, Lemma \ref{condfor(iii)} corresponds to condition (\ref{keycond}). Also,
$C_3=2\Big|\Big|\sum_{\tau=1}^{t-1}\frac{1}{\sqrt{2}}Y_{\tau}\Big|\Big|_{(Q+A(t))^{-1}}.$
By Lemma \ref{delaPena04}, for any $0<\delta<1,$ with probability at least $1-\frac{\delta}{3t^2},$
\begin{align}
C_3
&\leq 2\sqrt{\mathrm{log}\big({det(B(t))}/{(\delta/3t^2)^2}\big)}. \label{3rdbound}
\end{align}
Plugging the bounds (\ref{1stbound}), (\ref{2ndbound}) and (\ref{3rdbound}) into (\ref{wrapup}) completes the proof. For any $0<\delta<1$, for all $i=1,\cdots,N,$ with probability at least $1-\frac{\delta}{t^2}$, 
\begin{align*}
\big|b_i^c(t)^T(\hat{\mu}(t)-\mu)\big|
&\leq s_{t,i}^c\big\{ (2R+6)\sqrt{\mathrm{log}\Big(\frac{det(B(t))}{(\delta/(3t^2))^2}\Big)}+1\big\}\\
&\leq l(t)s_{t,i}^c,
\end{align*}
 where the second inequality is due to the determinant-trace inequality, $det(B(t))\leq \big({trace(B(t))}/{d}\big)^d \leq ({2t})^d.$ 

\section{Simulation study}\label{s5}


We conduct simulation studies to evaluate the proposed algorithm, the original TS algorithm \cite{Agrawal}, the action-centered TS (ACTS) algorithm \citep{Greenewald} and the BOSE algorithm \citep{Krishnamurthy}. {We set $N\!\!=\!\!2$ or $6$ and $d\!=\!10$. We set the first action to be the base action, i.e., $b_1(t)\!=\!0_d$ for all $t$, and form the other context vectors as $b_{i}(t)\!=\![I(i\!\!=\!\!2)z_i(t)^T,\cdots,I(i\!\!=\!\!N)z_i(t)^T]^T$,  
where $z_i(t)\in\mathbb{R}^{d'}$, $d'\!\!=\!\!d/(N\!-\!1)$, and $z_i(t)$ is generated uniformly at random from the $d'$-dimensional unit sphere.} We generate $\eta_i(t)\overset{i.i.d.}{\sim}\mathcal{N}(0,0.01^2)$ and the rewards from (\ref{semipara}),
where we set $\mu=[-0.55,0.666,-0.09,-0.232,0.244,$ $0.55,-0.666, 0.09,0.232,-0.244]^T$ and consider three cases for $\nu(t)$:
(i) $\nu(t)\!=\!0$, (ii) $\nu(t)\!=\!-b_{a^*(t)}(t)^T\mu,$ (iii) $\nu(t)\!=\!\mathrm{log}(t+1).$
We conduct 30 simulations in total for each case. {Note that all four algorithms have one tuning parameter each that controls the degree of exploration. For the TS algorithms, the tuning parameter is $v$ in the variance of $\tilde{\mu}(t)$, and for the BOSE algorithm,  $\omega$ in the action elimination step.} For each algorithm, we use the value of the parameter which incurs minimum median regret over 30 simulations. These values can be found by grid search. 

\begin{figure*}
\begin{center}
~\includegraphics[width=16.5cm]{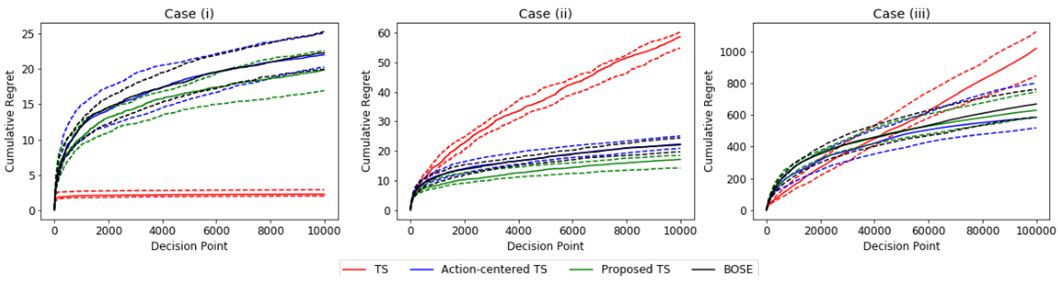}
\end{center}
\caption{Median (solid), 1st and 3rd quartiles (dashed) of cumulative regret over 30 simulations when $N=2$.}\label{figureN2}
\end{figure*}

\begin{figure*}
\begin{center}
~\includegraphics[width=16.5cm]{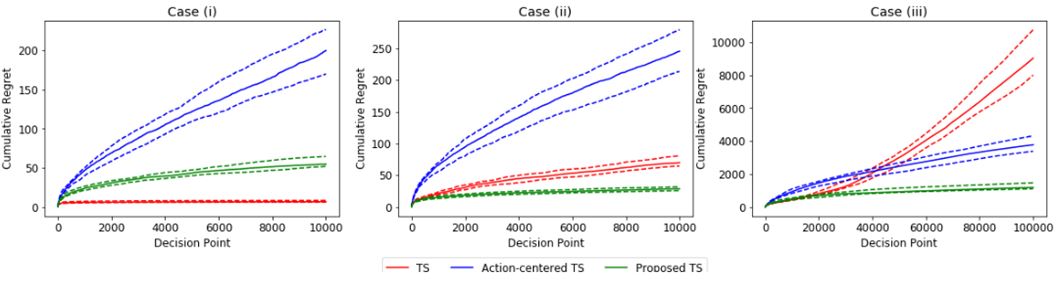}
\end{center}
\caption{Median (solid), 1st and 3rd quartiles (dashed) of cumulative regret over 30 simulations when $N=6$.}\label{figure}
\end{figure*}

\begin{table}[!h]\caption{Median of $R(T)$ over 30 simulations.}\label{table_simul}
\begin{center}
\begin{tabular}{p{1.3cm}p{0.5cm}p{0.5cm}p{0.6cm}p{0.6cm}p{0.6cm}p{0.6cm}}
\hline
\multirow{2}{*}{\bf Algorithms}& \multicolumn{3}{c}{$N=2$}& \multicolumn{3}{c}{$N=6$}\\
\cmidrule(lr){2-4}\cmidrule(lr){5-7}
& \multicolumn{1}{c}{(i)} & \multicolumn{1}{c}{(ii)} &\multicolumn{1}{c}{(iii)} &\multicolumn{1}{c}{(i)} &\multicolumn{1}{c}{(ii)} &\multicolumn{1}{c}{(iii)}   \\
\hline
\multicolumn{1}{l}{TS}& ~~2& 59& 1020&~~~~7 &~~70& 9026\\
\multicolumn{1}{l}{ACTS}& 22& 22& ~~586& 200& 246&3772\\
\multicolumn{1}{l}{Proposed TS}&20 &17 &~~629 &~~55 & ~~29&1186\\
\multicolumn{1}{l}{BOSE}&22 & 22&~~669 & \multicolumn{1}{c}{-----} & \multicolumn{1}{c}{-----}&\multicolumn{1}{c}{-----}\\
\hline
\end{tabular}
\end{center}
\end{table}

Figures \ref{figureN2} and \ref{figure} show the cumulative regret $R(t)$ according to time $t$. The solid lines represent the median values and the dashed lines represent the lower and upper 25\% percentiles. 
The values of $R(T)$ for each algorithm in each case are reported in Table \ref{table_simul}. 
Figure \ref{figureN2} summarizes the results when $N=2$. {When $\nu(t)=0$, the original TS algorithm achieves lowest cumulative regret. The proposed method shows the second best performance in this case, while the BOSE and ACTS algorithms are also competitive. In cases where $\nu(t)$ changes with time, the original TS algorithm hardly learns at all, while the three other methods  developed under the nonparametric intercept term are competitive. When $N=6$, Figure \ref{figure} exhibits a similar trend for the original TS and the proposed  algorithms as in the $N=2$ case. On the other hand, the BOSE algorithm has no explicit method so the results are not shown and the ACTS algorithm has much slower learning speed than the proposed TS method. }

\section{Real data analysis}\label{s6}

{We present the results of the proposed and existing methods using the R6A dataset provied by Yahoo! Webscope.} The dataset is observational log data of user clicks from May 1st, 2009 to May 10th, 2009, which corresponds to 45,811,883 user visits. At every visit, one article was chosen uniformly at random from 20 articles ($N=20$) and was displayed in the Featured tab of the Today module on Yahoo! front page. The reward $r_i(t)$ is binary, taking value $1$ if the visiting user clicked the $i$-th article, and $r_i(t)=0$ otherwise. For each article $i$, there is a context vector $b_i(t)\in\mathbb{R}^{35}$, which is constituted of 5 extracted user features, 5 extracted article features and their products. The extracted features were constructed from high-dimensional raw data for user and article features using a dimension reduction method of \citet{Chu09}.

Evaluating a new reinforcement learning policy retrospectively using observational log data is a challenging task itself and calls for off-policy evaluation methods. This is because in the log data, the rewards of the actions that were not chosen by the original logging policy are missing. In our log data, only the rewards of articles chosen by the uniform random policy are observed.

Denote the policy that we want to evaluate as $\texttt{A}$ and the total $T$-trial reward of $\texttt{A}$ as $G_{\texttt{A}}(T):=\mathbb{E}\big[\sum_{t=1}^Tr_{a(t)}(t)\big]$, where $a(t)$ is chosen by $\texttt{A}$. \citet{Li11} proposed an offline evaluation algorithm for estimating $G_{\texttt{A}}(T)$. Given the stream of events $(\boldsymbol{b}=\{b_i\}_{i=1}^N,a,r_{a})$ from log data, the algorithm picks up the events of which the chosen action $a$ matches the choice of $\texttt{A}$ and stacks them into the history of $\texttt{A}$. {Rewards in the history are used to construct the estimate $\hat{G}_{\texttt{A}}(T)$. In the case where $\texttt{A}$ is an online learning policy, the history is used to update the action selection distribution of $\texttt{A}$ as well.} Under the condition that $(\boldsymbol{b}(t), r(t))$ are i.i.d. and the logging policy is the uniform random policy, $\hat{G}_{\texttt{A}}(T)$ is shown to be unbiased. 
We note that the i.i.d. condition does not cover the case where $\nu(t)$ is adaptive to the past trials. Still, the conditional distribution of $\nu(t)$ given $\boldsymbol{b}(t)$ is not restricted.
 

We evaluate the uniform random policy, TS algorithm and the proposed algorithm using the method of \citet{Li11}. We use data of May 1st, 2009 as tuning data to choose the optimal exploration parameter $v$ for the TS algorithm and the proposed algorithm, respectively. Then we conduct main analysis on data from May 2nd to May 10th, 2009. Note that the method of \citet{Li11} picks up over ${1}/{N}={1}/{20}$ of the log data. This corresponds to over $T=1900000$. We fix the value of $T$ to $T=1900000$ a priori, and conduct the evaluation algorithm for 10 times on the same data for each policy. Since the evaluated policies are all randomized algorithms, each of the 10 runs pick up different actions, giving 10 different estimates. We report the mean, 1st quartile and 3rd quartile of the estimates for each policy in Table \ref{table}.  
\begin{table}\caption{Mean, 1st quartile (1st Q.) and 3rd quartile (3rd Q.) of user clicks achieved by each policy over 10 runs.}\label{table}
\begin{center}
\begin{tabular}{p{2.8cm}p{1.2cm}p{1.2cm}p{1.2cm}}
\hline
\bf Policies& \multicolumn{1}{c}{Mean} &\multicolumn{1}{c}{1st Q.} & \multicolumn{1}{c}{3rd Q.}\\
\hline
\multicolumn{1}{l}{Uniform policy}& 66696.7&66515.0& 66832.8\\
\multicolumn{1}{l}{TS algorithm}& 86907.0&85992.8& 88551.3\\
\multicolumn{1}{l}{Proposed TS}& 90689.7&90177.3 & 91166.3\\
\hline
\end{tabular}
\end{center}
\end{table}
We verify that the contextual bandit algorithms achieve substantially higher user click rates than the uniform random policy. Among the contextual bandit algorithms, the proposed algorithm 
increases the average user click rate by 4.4\% compared to the original TS algorithm. 

\section{Concluding remarks}\label{s7}

This paper proposes a new contextual MAB algorithm for a semiparametric reward model which is well suited to real problems where baseline rewards are bound to change with time.
 The proposed algorithm improves on existing methods that consider the same model. 
 Simulation study and real data analysis demonstrate the advantage of the proposed method.

\newpage

\section*{Appendix}

\setcounter{section}{0}
\renewcommand\thesection{\Alph{section}}

\section{Preliminaries}

\begin{lem}\label{boundons} \citep[Lemma 11 of ][]{Abbasi-Yadkori} Let $\{X_{t}\}_{t=1}^T$ be a sequence in $\mathbb{R}^d$ with $||X_{t}||_2\leq 1$, $Q$ a $d\times d$ positive definite matrix with $det(Q)\geq 1$ and $A(t)=\sum_{\tau=1}^{t-1}X_{\tau}X_{\tau}^T.$ Then, we have
$$\sum_{t=1}^TX_{t}^T\{Q+A(t)\}^{-1}X_{t}\leq 2\mathrm{log}\Big(\frac{det(Q+A(T+1))}{det(Q)}\Big).$$
\end{lem}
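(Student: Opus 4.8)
The plan is to treat the left-hand side as a telescoping \emph{elliptical potential} and trade each quadratic form against the log-increment of a determinant. Write $V_t:=Q+A(t)$, so that $V_1=Q$ (the sum defining $A(1)$ is empty) and $V_{t+1}=V_t+X_tX_t^T$. Since $V_t$ is positive definite, the matrix determinant lemma gives $\det(V_{t+1})=\det(V_t+X_tX_t^T)=\det(V_t)\bigl(1+X_t^TV_t^{-1}X_t\bigr)$, hence $1+X_t^TV_t^{-1}X_t=\det(V_{t+1})/\det(V_t)$. Taking logarithms and summing over $t$ telescopes the right-hand side: $\sum_{t=1}^T\log\bigl(1+X_t^TV_t^{-1}X_t\bigr)=\log\bigl(\det(V_{T+1})/\det(V_1)\bigr)=\log\bigl(\det(Q+A(T+1))/\det(Q)\bigr)$.

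It remains to pass from $X_t^TV_t^{-1}X_t$ to $\log(1+X_t^TV_t^{-1}X_t)$. The elementary inequality $x\le 2\log(1+x)$ holds for all $x\in[0,1]$: the difference $2\log(1+x)-x$ vanishes at $x=0$ and has derivative $(1-x)/(1+x)\ge 0$ on $[0,1]$, so it is nondecreasing and nonnegative there. Applying this with $x=X_t^TV_t^{-1}X_t$ and summing will yield $\sum_{t=1}^TX_t^TV_t^{-1}X_t\le 2\sum_{t=1}^T\log(1+X_t^TV_t^{-1}X_t)=2\log\bigl(\det(Q+A(T+1))/\det(Q)\bigr)$, which is exactly the claimed bound.

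The one point needing care --- and the main obstacle --- is verifying the hypothesis $X_t^TV_t^{-1}X_t\le 1$ of the elementary inequality, since for $x>1$ the bound $x\le 2\log(1+x)$ fails. Here I would use $V_t=Q+A(t)\succeq Q$ together with $\|X_t\|_2\le1$, which gives $X_t^TV_t^{-1}X_t\le\lambda_{\max}(V_t^{-1})\,\|X_t\|_2^2\le 1/\lambda_{\min}(Q)$; this is at most $1$ precisely when $\lambda_{\min}(Q)\ge1$, a condition met by every matrix $Q$ used in the paper, where $Q=I_d$ or $Q=I_d+\Sigma_t\succeq I_d$. Alternatively, to lean only on the stated hypothesis $\det(Q)\ge1$, I would replace each summand by $\min(1,X_t^TV_t^{-1}X_t)$ and use that $\min(1,x)\le 2\log(1+x)$ holds for \emph{all} $x\ge0$ (for $x\ge1$ the left side is $1\le 2\log2\le2\log(1+x)$); the boundedness $\|X_t\|_2\le1$ and $\lambda_{\min}(Q)\ge1$ then remove the $\min$. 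In either route the determinant telescoping is the substantive step and everything else reduces to the scalar inequality above.
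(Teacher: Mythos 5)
Your argument is the standard elliptical-potential proof and is correct; the paper itself supplies no proof for this lemma but simply cites Lemma 11 of Abbasi-Yadkori et al.\ (2011), whose argument is exactly the determinant telescoping plus the scalar inequality $x\le 2\log(1+x)$ that you give. Your observation that the stated hypothesis $\det(Q)\ge 1$ does not by itself guarantee $X_t^TV_t^{-1}X_t\le 1$ is a genuine and worthwhile catch: the original Abbasi-Yadkori statement assumes $\lambda_{\min}(Q)\ge\max(1,L^2)$ for precisely this reason, and your patch (noting that every $Q$ used in this paper satisfies $Q\succeq I_d$, or else falling back to the $\min(1,\cdot)$ variant) is the right way to close that gap.
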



\begin{lem}\label{bercubound} \citep[Lemma 2.1 of][]{Bercu}
Let $x$ be a square integrable random variable with mean $0$ and variance $\sigma^2>0.$ Then,
$$\mathbb{E}\Big[\mathrm{exp}\Big(x-\frac{1}{2}x^2-\frac{1}{2}\sigma^2\Big)\Big]\leq 1. $$
\end{lem}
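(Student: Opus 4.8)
The plan is to reduce the claim to a deterministic pointwise inequality and then integrate. First I would establish that for every real $x$,
\begin{equation*}
\exp\left(x - \tfrac{1}{2}x^2\right) \le 1 + x + \tfrac{1}{2}x^2.
\end{equation*}
Granting this, I would take expectations and use $\mathbb{E}[x]=0$ and $\mathbb{E}[x^2]=\sigma^2$ (both finite by square-integrability) to get $\mathbb{E}\big[\exp(x - \tfrac12 x^2)\big] \le 1 + \tfrac12\sigma^2 \le \exp(\tfrac12\sigma^2)$, where the last step is the elementary bound $1+u\le e^u$ applied with $u=\sigma^2/2$. Multiplying through by $\exp(-\tfrac12\sigma^2)$ then yields exactly $\mathbb{E}\big[\exp(x - \tfrac12 x^2 - \tfrac12\sigma^2)\big] \le 1$, which is the assertion.

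The real work, and the step I expect to be the main obstacle, is the pointwise inequality: it is a transcendental statement and so demands a genuine calculus argument rather than a one-line manipulation. My approach would be to pass to logarithms. I would first note that the right-hand side is strictly positive for all $x$, since $1 + x + \tfrac12 x^2 = \tfrac12\big((x+1)^2 + 1\big) > 0$, so taking logs is legitimate and the desired inequality is equivalent to $\phi(x) := \log\!\big(1 + x + \tfrac12 x^2\big) - \big(x - \tfrac12 x^2\big) \ge 0$. I would then differentiate and simplify; a direct computation gives
\begin{equation*}
\phi'(x) = \frac{x\,(x^2 + x + 2)}{x^2 + 2x + 2}.
\end{equation*}
Because both quadratics $x^2 + x + 2$ and $x^2 + 2x + 2 = (x+1)^2 + 1$ have negative discriminant and are therefore strictly positive, $\phi'(x)$ has the same sign as $x$ for every $x$.

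Consequently $\phi$ is strictly decreasing on $(-\infty,0)$ and strictly increasing on $(0,\infty)$, so it attains its global minimum at $x=0$, where $\phi(0)=0$. This proves $\phi(x)\ge 0$ everywhere, hence the pointwise bound, and the theorem follows by the integration step above. I would close with two remarks: square-integrability of $x$ is precisely what makes the expectation of the quadratic majorant $1 + x + \tfrac12 x^2$ finite, so the passage from the pointwise inequality to the expectation bound is valid with no further integrability hypotheses; and the hypothesis $\sigma^2>0$ plays no role beyond guaranteeing the variance is well defined, since the argument also covers the degenerate case $\sigma^2=0$.
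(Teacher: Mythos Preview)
Your proof is correct. The paper does not supply its own proof of this lemma---it is simply cited as Lemma~2.1 of \citet{Bercu}---but the argument you give is in fact the same one Bercu and Touati use: establish the deterministic inequality $\exp(x-\tfrac12 x^2)\le 1+x+\tfrac12 x^2$, take expectations, and bound $1+\tfrac12\sigma^2$ by $e^{\sigma^2/2}$.
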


\begin{lem}\label{delaPena09} \citep[Lemma 7 of ][]{delaPena2} Let $X_{\tau}\in\mathbb{R}^d$ be $\mathcal{F}_{\tau}$-measurable for some filtration $\{\mathcal{F}_{\tau}\}_{\tau=1}^t,$ $\mathbb{E}\big[X_{\tau}|\mathcal{F}_{\tau-1}\big]=0,$ and $||X_{\tau}||_2\leq B$ for some constant $B$, $\tau=1,\cdots,t.$ Let $c_{\tau}\in \mathbb{R}$ be $\mathcal{F}_{\tau}$-measurable, $|c_{\tau}|\leq 1$ and $X_{\tau} \perp c_{\tau} |\mathcal{F}_{\tau-1}.$ Then for any $\lambda \in \mathbb{R}^d$, 
$$\mathbb{E}\Big[\mathrm{exp}\Big\{\lambda^T\sum_{\tau=1}^tX_{\tau}c_{\tau}-\frac{1}{2}\lambda^T\Big(\sum_{\tau=1}^tX_{\tau}X_{\tau}^T+\sum_{\tau=1}^t\mathbb{E}\big[X_{\tau}X_{\tau}^T|\mathcal{F}_{\tau-1}\big]\Big)\lambda\Big\}\Big]\leq 1.$$
\end{lem}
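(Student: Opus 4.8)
The plan is to prove the stated inequality by exhibiting an \emph{exponential supermartingale} and collapsing it with the tower property, using the scalar inequality of Lemma \ref{bercubound} as the one-step engine. Fix $\lambda\in\mathbb{R}^d$ and define, for $0\le s\le t$,
\[
M_s=\mathrm{exp}\Big\{\lambda^T\sum_{\tau=1}^{s}X_{\tau}c_{\tau}-\tfrac{1}{2}\lambda^T\Big(\sum_{\tau=1}^{s}X_{\tau}X_{\tau}^T+\sum_{\tau=1}^{s}\mathbb{E}\big[X_{\tau}X_{\tau}^T\mid\mathcal{F}_{\tau-1}\big]\Big)\lambda\Big\},
\]
with $M_0=1$. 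Since the claimed bound is exactly $\mathbb{E}[M_t]\le 1$, it suffices to show that $\{M_s\}$ is an $\{\mathcal{F}_s\}$-supermartingale, i.e. $\mathbb{E}[M_s\mid\mathcal{F}_{s-1}]\le M_{s-1}$. Because the step-$s$ increment of the exponent multiplies the $\mathcal{F}_{s-1}$-measurable factor $M_{s-1}$, this reduces to the \emph{one-step conditional bound}
\[
\mathbb{E}\Big[\mathrm{exp}\Big\{\lambda^TX_{s}c_{s}-\tfrac{1}{2}\lambda^TX_{s}X_{s}^T\lambda-\tfrac{1}{2}\lambda^T\mathbb{E}\big[X_{s}X_{s}^T\mid\mathcal{F}_{s-1}\big]\lambda\Big\}\,\Big|\,\mathcal{F}_{s-1}\Big]\le 1.
\]

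To establish this one-step bound I would apply Lemma \ref{bercubound} \emph{conditionally} on $\mathcal{F}_{s-1}$ to the scalar $x:=\lambda^TX_{s}c_{s}$. Using $\mathbb{E}[X_{s}\mid\mathcal{F}_{s-1}]=0$ together with the conditional independence $X_{s}\perp c_{s}\mid\mathcal{F}_{s-1}$, the conditional mean is $\mathbb{E}[x\mid\mathcal{F}_{s-1}]=0$ and the conditional variance is $\sigma^2=\mathbb{E}[c_{s}^2\mid\mathcal{F}_{s-1}]\,\lambda^T\mathbb{E}[X_{s}X_{s}^T\mid\mathcal{F}_{s-1}]\lambda$. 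The two crucial observations are then that $|c_{s}|\le 1$ forces $x^2=c_{s}^2\,\lambda^TX_{s}X_{s}^T\lambda\le\lambda^TX_{s}X_{s}^T\lambda$ and, likewise, $\sigma^2\le\lambda^T\mathbb{E}[X_{s}X_{s}^T\mid\mathcal{F}_{s-1}]\lambda$. Both inequalities point the same way, so the exponent inside the one-step bound is pointwise at most $x-\tfrac12 x^2-\tfrac12\sigma^2$; taking $\mathbb{E}[\cdot\mid\mathcal{F}_{s-1}]$ and invoking Lemma \ref{bercubound}, which gives $\mathbb{E}[\mathrm{exp}(x-\tfrac12 x^2-\tfrac12\sigma^2)\mid\mathcal{F}_{s-1}]\le 1$, finishes the step. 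Telescoping through $\mathbb{E}[M_t]=\mathbb{E}[\mathbb{E}[M_t\mid\mathcal{F}_{t-1}]]\le\mathbb{E}[M_{t-1}]\le\cdots\le M_0=1$ then yields the lemma.

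The hard part, and the reason the statement is more than a verbatim restatement of Bercu's scalar inequality, is keeping the replacement of the random quadratic terms by the larger quantities $\lambda^TX_{s}X_{s}^T\lambda$ and $\lambda^T\mathbb{E}[X_{s}X_{s}^T\mid\mathcal{F}_{s-1}]\lambda$ oriented so that it \emph{shrinks} the exponent; reversing either sign would destroy the supermartingale property. A secondary technical point I would verify is that Lemma \ref{bercubound} is legitimately applied to the conditional law of $x$ given $\mathcal{F}_{s-1}$: this needs $x$ to be conditionally square-integrable (immediate from $\|X_{\tau}\|_2\le B$ and $|c_{\tau}|\le 1$, with the degenerate case $\sigma^2=0$ handled trivially since then $x=0$ almost surely). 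Notably, no distributional hypothesis on $c_{s}$ beyond $|c_{s}|\le 1$ and the conditional independence is used, which is exactly what lets the conclusion absorb the non-sub-Gaussian term $(\nu(\tau)+\bar{b}(\tau)^T\mu)/2$ that appears when bounding $C_2$.
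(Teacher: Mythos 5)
Your proposal is correct and follows essentially the same route as the paper's proof: apply Lemma \ref{bercubound} conditionally to $x=\lambda^TX_{\tau}c_{\tau}$, use $c_{\tau}^2\leq 1$ and positive semi-definiteness to replace $c_{\tau}^2X_{\tau}X_{\tau}^T$ and its conditional expectation by the larger matrices appearing in the statement, and collapse by the tower property. The only difference is presentational — you spell out the supermartingale telescoping that the paper leaves implicit after stating the one-step conditional bound.
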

\begin{proof}
\noindent Taking $x=\lambda^TX_{\tau}c_{\tau}$, we have from Lemma \ref{bercubound}, 
$$\mathbb{E}\Big[\mathrm{exp}\Big\{\lambda^TX_{\tau}c_{\tau}-\frac{1}{2}\lambda^T\Big(c_{\tau}^2X_{\tau}X_{\tau}^T+\mathbb{E}\big[c_{\tau}^2X_{\tau}X_{\tau}^T|\mathcal{F}_{\tau-1}\big]\Big)\lambda\Big\}\Big|\mathcal{F}_{\tau-1}\Big]\leq 1.$$
Since $c_{\tau}^2\leq 1$ and $X_{\tau}X_{\tau}^T$ is positive semi-definite, 
$$\mathbb{E}\Big[\mathrm{exp}\Big\{\lambda^TX_{\tau}c_{\tau}-\frac{1}{2}\lambda^T\Big(X_{\tau}X_{\tau}^T+\mathbb{E}\big[X_{\tau}X_{\tau}^T|\mathcal{F}_{\tau-1}\big]\Big)\lambda\Big\}\Big|\mathcal{F}_{\tau-1}\Big]\leq 1.$$
\end{proof}

\begin{lem}\label{gaussianbound} \citep{Abramowitz} If $Z\sim \mathcal{N}(m,\sigma^2),$ for any $z\geq 1$, $$\frac{1}{2\sqrt{\pi}z}\mathrm{exp}\Big(-\frac{z^2}{2}\Big)\leq \mathbb{P}\big(|Z-m|>z\sigma\big)\leq \frac{1}{\sqrt{\pi}z}\mathrm{exp}\Big(-\frac{z^2}{2}\Big).$$
\end{lem}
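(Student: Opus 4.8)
The plan is to reduce the two-sided Gaussian tail bound to a one-dimensional estimate for the standard normal and then control that tail by the classical Mills-ratio inequalities. First I would standardize: setting $W=(Z-m)/\sigma$ gives $W\sim\mathcal{N}(0,1)$, and by symmetry of the density
\begin{equation*}
\mathbb{P}\big(|Z-m|>z\sigma\big)=\mathbb{P}\big(|W|>z\big)=2\,\mathbb{P}(W>z)=\sqrt{\tfrac{2}{\pi}}\int_z^\infty e^{-w^2/2}\,dw.
\end{equation*}
This turns the problem into sandwiching the tail integral $I(z):=\int_z^\infty e^{-w^2/2}\,dw$ between two explicit multiples of $z^{-1}e^{-z^2/2}$; multiplying by the prefactor $\sqrt{2/\pi}$ then produces the constants $1/(2\sqrt{\pi}z)$ and $1/(\sqrt{\pi}z)$ in the statement.

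For the upper estimate I would exploit $1\le w/z$ for $w\ge z$, so that $I(z)\le z^{-1}\int_z^\infty w\,e^{-w^2/2}\,dw=z^{-1}e^{-z^2/2}$. For the lower estimate the natural device is integration by parts, which gives $I(z)\ge (z^{-1}-z^{-3})e^{-z^2/2}$; here the hypothesis $z\ge1$ is exactly what lets me discard the $z^{-3}$ correction and collapse the bound into a clean multiple of $z^{-1}e^{-z^2/2}$. Feeding both estimates through the $\sqrt{2/\pi}$ prefactor yields the claimed two-sided inequality in order.

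The only delicate point is matching the \emph{exact} prefactors rather than merely the order $z^{-1}e^{-z^2/2}$: the crude argument above is slightly loose, and pinning down the stated constants requires the sharper two-sided rational bounds on the Mills ratio, of the Gordon type $\frac{z}{z^2+1}\phi(z)\le 1-\Phi(z)\le z^{-1}\phi(z)$, where $\phi$ and $\Phi$ denote the standard normal density and CDF. The restriction $z\ge1$ enters through the elementary inequality $z/(z^2+1)\ge 1/(2z)$, which handles the lower constant. Since this is a classical analytic fact, I would carry out the constant bookkeeping under $z\ge1$ and cite \citet{Abramowitz} for the precise rational approximations rather than reproduce them in full. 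Thus the structural content (standardization, symmetry, and the two Mills-ratio estimates) is routine, and the main obstacle is purely the tracking of constants.
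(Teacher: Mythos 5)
The paper does not actually prove this lemma; it simply cites \citet{Abramowitz}, so there is no in-paper argument to compare yours against. Your reduction (standardize, exploit symmetry, bound the tail integral $I(z)$) is the standard route, and your treatment of the lower bound is sound once you pass to the Gordon bound: the integration-by-parts estimate $(z^{-1}-z^{-3})e^{-z^2/2}$ by itself degenerates at $z=1$, but $1-\Phi(z)\ge \frac{z}{z^2+1}\phi(z)$ combined with $z/(z^2+1)\ge 1/(2z)$ for $z\ge 1$ gives $\mathbb{P}(|Z-m|>z\sigma)\ge \frac{1}{\sqrt{2\pi}\,z}e^{-z^2/2}\ge \frac{1}{2\sqrt{\pi}\,z}e^{-z^2/2}$, as required.

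The genuine gap is in the step you defer as ``constant bookkeeping'' for the upper bound: it cannot be completed, because the upper inequality as stated is false for general $z\ge 1$. Your own calculation already yields the correct constant, $\mathbb{P}(|Z-m|>z\sigma)=\sqrt{2/\pi}\,I(z)\le \sqrt{2/\pi}\,z^{-1}e^{-z^2/2}$, and the Mills-ratio bound $1-\Phi(z)\le \phi(z)/z$ is asymptotically tight, so no sharper rational approximation can remove the extra factor $\sqrt{2}$ and reach $\frac{1}{\sqrt{\pi}z}e^{-z^2/2}$. Concretely, at $z=2$ the two-sided tail is $2(1-\Phi(2))\approx 0.0455$, whereas $\frac{1}{2\sqrt{\pi}}e^{-2}\approx 0.0382$, so the stated upper bound is violated; the correct statement has $\sqrt{2}/(\sqrt{\pi}z)$ in place of $1/(\sqrt{\pi}z)$. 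This slip is inherited from the corresponding lemma in \citet{Agrawal} and is harmless downstream, since the paper only uses the consequences $\mathbb{P}(|Z_j(t)|>z)\le e^{-z^2/2}$ (valid because $\sqrt{2/\pi}/z\le 1$ for $z\ge 1$) and the one-sided lower bound in Proposition \ref{saturatebound}. But as a proof of the lemma as literally written, your upper half would fail exactly at the point where you promise the constants will work out.
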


\section{Proof of Theorem 4.2}

The proof of Theorem 4.2 follows the proof sketch of Section 4.2. 

\subsection{Proof of (14)}

Take $c_{\tau}=\Big(\frac{\nu(\tau)+\bar{b}(\tau)^T\mu}{2}\Big).$ Since $\mathbb{E}\big[X_{\tau}|\mathcal{F}_{\tau-1}\big]=0,$ $|c_{\tau}|\leq 1,$ and $X_{\tau}\perp c_{\tau}|\mathcal{F}_{\tau-1},$ we can apply Lemma \ref{delaPena09}, i.e., for any $\lambda\in \mathbb{R}^d, $
\begin{equation}\mathbb{E}\Big[\mathrm{exp}\Big\{\lambda^T\sum_{\tau=1}^{t-1}X_{\tau}c_{\tau}-\frac{1}{2}\lambda^T\Big(\sum_{\tau=1}^{t-1}X_{\tau}X_{\tau}^T+\sum_{\tau=1}^{t-1}\mathbb{E}[X_{\tau}X_{\tau}^T|\mathcal{F}_{\tau-1}]\Big)\lambda\Big\}\Big]\leq 1.\nonumber\end{equation}

\subsection{Proof of Lemma 4.4}

By Lemma \ref{delaPena09}, for any $\lambda\in\mathbb{R}^d$, 
\begin{equation*}\mathbb{E}\Big[\mathrm{exp}\Big\{\lambda^T\sum_{\tau=1}^{t-1}\frac{1}{\sqrt{2}}Y_{\tau}-\frac{1}{2}\lambda^T\Big(\frac{1}{2}\sum_{\tau=1}^{t-1}Y_{\tau}Y_{\tau}^T+\frac{1}{2}\sum_{\tau=1}^{t-1}\mathbb{E}\big[Y_{\tau}Y_{\tau}^T|\mathcal{F}_{\tau-1}\big]\Big)\lambda\Big\}\Big]\leq 1.\end{equation*}
Here, \begin{align}\lambda^TY_{\tau}Y_{\tau}^T\lambda&=\lambda^TD({\tau})\mu\mu^TD({\tau})\lambda\nonumber\\
&=\big\{\big(D(\tau)\lambda\big)^T\mu\big\}^2\nonumber\\
&\leq \mu^T\mu\big(D(\tau)\lambda\big)^T\big(D(\tau)\lambda\big)~~(\because \text{Cauchy-Schwarz inequality})\nonumber\\
&\leq \big(D(\tau)\lambda\big)^T\big(D(\tau)\lambda\big) = \lambda^TD(\tau)^2\lambda, \label{(i)}
\end{align}
and 
\begin{align}
\lambda^T\mathbb{E}\big[Y_{\tau}Y_{\tau}^T|\mathcal{F}_{\tau-1}\big]\lambda&\leq \lambda^T\mathbb{E}\big[D(\tau)^2|\mathcal{F}_{\tau-1}\big]\lambda. \label{(ii)}
\end{align}
Let $L=X_{\tau}X_{\tau}^T$ and $K=\mathbb{E}\big[X_{\tau}X_{\tau}^T|\mathcal{F}_{\tau-1}\big].$ Then,
\begin{align}
\lambda^TD(\tau)^2\lambda&=\lambda^T\big(L-K\big)^2\lambda \nonumber\\
&=\lambda^TL^2\lambda+\lambda^TK^2\lambda+2\lambda^TL(-K)\lambda\nonumber\\
&\leq \lambda^TL^2\lambda+\lambda^TK^2\lambda+2\sqrt{\lambda^TL^2\lambda~ \lambda^TK^2\lambda}~~(\because \text{Cauchy-Schwarz inequality})\nonumber\\
&\leq 2\lambda^TL^2\lambda+2\lambda^TK^2\lambda. \label{(iii)}
\end{align}
Also, \begin{align}
\mathbb{E}\big[D(\tau)^2|\mathcal{F}_{\tau-1}\big]&=\mathbb{E}\big[(L-K)^2|\mathcal{F}_{\tau-1}\big] \nonumber \\
&=\mathbb{E}\big[L^2|\mathcal{F}_{\tau-1}\big]-\mathbb{E}\big[L|\mathcal{F}_{\tau-1}\big]K-K\mathbb{E}\big[L|\mathcal{F}_{\tau-1}\big]+K^2\nonumber \\
&=\mathbb{E}\big[L^2|\mathcal{F}_{\tau-1}\big]-K^2 ~~~(\because \mathbb{E}\big[L|\mathcal{F}_{\tau-1}\big]=K) \nonumber \\
\Rightarrow \lambda^T\mathbb{E}\big[D(\tau)^2|\mathcal{F}_{\tau-1}\big]\lambda &\leq 2\lambda^T\mathbb{E}\big[D(\tau)^2|\mathcal{F}_{\tau-1}\big]\lambda \nonumber\\
&=2\lambda^T\mathbb{E}\big[L^2|\mathcal{F}_{\tau-1}\big]\lambda-2\lambda^TK^2\lambda. \label{(iv)}
\end{align}
Due to (\ref{(i)}), (\ref{(ii)}), (\ref{(iii)}) and (\ref{(iv)}), 
\begin{align*}\lambda^T\Big(Y_{\tau}Y_{\tau}^T+\mathbb{E}\big[Y_{\tau}Y_{\tau}^T|\mathcal{F}_{\tau-1}\big]\Big)\lambda&\leq 2\lambda^T\Big(L^2 + \mathbb{E}\big[L^2|\mathcal{F}_{\tau-1}\big]\Big)\lambda\\
&\leq 2\lambda^T\Big(X_{\tau}X_{\tau}^T+\mathbb{E}\big[X_{\tau}X_{\tau}^T|\mathcal{F}_{\tau-1}\big]\Big)\lambda,  \end{align*}
where the last inequality is due to $L=X_{\tau}X_{\tau}^T$ and $X_{\tau}^TX_{\tau}\leq 1$.
Therefore, for any $\lambda\in\mathbb{R}^d$, 

\begin{equation*}
\mathbb{E}\Big[\mathrm{exp}\Big\{\lambda^T\sum_{\tau=1}^{t-1}\frac{1}{\sqrt{2}}Y_{\tau}-\frac{1}{2}\lambda^T\Big(\sum_{\tau=1}^{t-1}X_{\tau}X_{\tau}^T+\sum_{\tau=1}^{t-1}\mathbb{E}\big[X_{\tau}X_{\tau}^T|\mathcal{F}_{\tau-1}\big]\Big)\lambda\Big\}\Big]~~~~~~~~~~~~~~~~~~~~~~~~~~~~~~~~~~~~~~~~~~~~~
\end{equation*}
\begin{align*}&\leq\mathbb{E}\Big[\mathrm{exp}\Big\{\lambda^T\sum_{\tau=1}^{t-1}\frac{1}{\sqrt{2}}Y_{\tau}-\frac{1}{2}\lambda^T\Big(\frac{1}{2}\sum_{\tau=1}^{t-1}Y_{\tau}Y_{\tau}^T+\frac{1}{2}\sum_{\tau=1}^{t-1}\mathbb{E}\big[Y_{\tau}Y_{\tau}^T|\mathcal{F}_{\tau-1}\big]\Big)\lambda\Big\}\Big]\nonumber\\
&\leq 1.\end{align*}

\section{Proof of Theorem 4.1}


The proof of Theorem 4.1 follows the lines of \citet{Agrawal} with some modifications. We present the whole proof.

\begin{enumerate}[(a)]
\item The first stage is the derivation of a high-probability upper bound of $|(b_i(t)-\bar{b}(t))^T(\hat{\mu}(t)-\mu)|$. This is done in Theorem 4.2, which we restate here for concreteness.
\begin{thm}\label{newmuhatbound2}Let the event $E^{\hat{\mu}}(t)$ be defined as follows:
$$E^{\hat{\mu}}(t)=\big\{\forall i: |\big(b_i(t)-\bar{b}(t)\big)^T(\hat{\mu}(t)-\mu)|\leq l(t)s_{t,i}^c\big\},$$
where $s_{t,i}^c=\sqrt{\big(b_i(t)-\bar{b}(t)\big)^TB(t)^{-1}\big(b_i(t)-\bar{b}(t)\big)}$ and $l(t)=(2R+6)\sqrt{d\mathrm{log}(6t^3/\delta)}+1$. Then for all $t\geq 1$, for any $0<\delta<1$, $\mathbb{P}(E^{\hat{\mu}}(t))\geq 1-\frac{\delta}{t^2}.$ 
\end{thm}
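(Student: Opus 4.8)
The plan is to control the error in the $B(t)^{-1}$-geometry and reduce everything to the self-normalized martingale inequality of Lemma \ref{delaPena04}. First I would expand $\hat{\mu}(t)-\mu$ from the definition (\ref{newmu_hat}), using $\mathbb{E}(r_i(t)|\mathcal{F}_{t-1})=\nu(t)+b_i(t)^T\mu$ and the identity $B(t)=I_d+\hat{\Sigma}_t+\Sigma_t$, to split the estimation error into a noise part $\sum_\tau 2X_\tau\eta_{a(\tau)}(\tau)$, a confounding part $\sum_\tau 2X_\tau(\nu(\tau)+\bar{b}(\tau)^T\mu)$, a regularization part $-\mu$, and a covariance-fluctuation part $\sum_\tau D(\tau)\mu$ with $D(\tau)=X_\tau X_\tau^T-\mathbb{E}(X_\tau X_\tau^T|\mathcal{F}_{\tau-1})$. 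Writing $b_i^c(t)=b_i(t)-\bar{b}(t)$ and applying Cauchy--Schwarz in the inner product induced by $B(t)^{-1}$, I would obtain $|b_i^c(t)^T(\hat{\mu}(t)-\mu)|\le s_{t,i}^c\{2C_1+2C_2+C_3+C_4\}$, so it suffices to bound the four scalar norms $C_1,\dots,C_4$ uniformly; multiplying through by $s_{t,i}^c$ then recovers the per-arm statement for all $i$ simultaneously.

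Two of these are routine. The term $C_4=\|\mu\|_{B(t)^{-1}}\le 1$ since $B(t)\succeq I_d$ and $\|\mu\|_2\le 1$ by (\ref{bounddness}). For $C_1$ I would use the $R$-sub-Gaussianity (\ref{error2}): conditionally on $\mathcal{F}_{\tau-1}$ and $a(\tau)$ the vector $X_\tau$ is fixed while $\eta_{a(\tau)}(\tau)$ is sub-Gaussian, which gives the one-step exponential bound and, telescoped over $\tau$, the hypothesis (\ref{keycond}) of Lemma \ref{delaPena04} with $c_\tau=\eta_{a(\tau)}(\tau)/R$, $A(t)=\hat{\Sigma}_t$, $Q=I_d+\Sigma_t$. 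For $C_2$ the multiplier $\nu(\tau)+\bar{b}(\tau)^T\mu$ is not sub-Gaussian, so instead I would invoke $\mathbb{E}(X_\tau|\mathcal{F}_{\tau-1})=0$ and the multivariate self-normalized inequality of \citet{delaPena2} (our Lemma \ref{delaPena09}) with $c_\tau=(\nu(\tau)+\bar{b}(\tau)^T\mu)/2$, which is bounded by $1$ under (\ref{bounddness}); this yields (\ref{condfor(ii)}) and again matches (\ref{keycond}) with $A(t)=\hat{\Sigma}_t+\Sigma_t$, $Q=I_d$. Each application of Lemma \ref{delaPena04} costs a factor $\delta/(3t^2)$ in the confidence.

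The main obstacle is $C_3=\|\sum_\tau D(\tau)\mu\|_{B(t)^{-1}}$, because $D(\tau)$ is a \emph{matrix} martingale difference and $\sum_\tau D(\tau)\mu$ is not literally of the form $\sum_\tau X_\tau c_\tau$ demanded by Lemma \ref{delaPena04}. My plan is to set $Y_\tau=D(\tau)\mu$, observe $\mathbb{E}(Y_\tau|\mathcal{F}_{\tau-1})=0$, and prove the exponential supermartingale bound of Lemma \ref{condfor(iii)} for $\tfrac{1}{\sqrt{2}}Y_\tau$. The delicate point is to dominate the self-normalizer: starting from Lemma \ref{delaPena09} applied to $Y_\tau$, I would show $\tfrac12(Y_\tau Y_\tau^T+\mathbb{E}[Y_\tau Y_\tau^T|\mathcal{F}_{\tau-1}])\preceq X_\tau X_\tau^T+\mathbb{E}[X_\tau X_\tau^T|\mathcal{F}_{\tau-1}]$ in the quadratic-form sense. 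This uses Cauchy--Schwarz to get $\lambda^T Y_\tau Y_\tau^T\lambda=(\mu^TD(\tau)\lambda)^2\le\lambda^TD(\tau)^2\lambda$, the expansion $D(\tau)^2=(L-K)^2$ with $L=X_\tau X_\tau^T$ and $K=\mathbb{E}[L|\mathcal{F}_{\tau-1}]$, the cross-term bound $2|\lambda^TLK\lambda|\le\lambda^TL^2\lambda+\lambda^TK^2\lambda$, and finally $L^2=\|X_\tau\|_2^2\,L\preceq L$ with $\|X_\tau\|_2$ controlled by (\ref{bounddness}). With this domination in hand, Lemma \ref{delaPena04} applies to $C_3$ with $A(t)=\hat{\Sigma}_t+\Sigma_t$ and $Q=I_d$ at confidence $1-\delta/(3t^2)$.

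Finally I would assemble the pieces. A union bound over the three high-probability events (each of probability at least $1-\delta/(3t^2)$) gives probability at least $1-\delta/t^2$ that $2C_1+2C_2+C_3+C_4\le(2R+6)\sqrt{\mathrm{log}\big(\det(B(t))/(\delta/(3t^2))^2\big)}+1$ holds simultaneously. Bounding $\det(B(t))$ by the determinant--trace inequality $\det(B(t))\le(\mathrm{trace}(B(t))/d)^d\le(2t)^d$ and simplifying the logarithm into the form $d\,\mathrm{log}(6t^3/\delta)$ turns the coefficient into $l(t)=(2R+6)\sqrt{d\,\mathrm{log}(6t^3/\delta)}+1$; multiplying by $s_{t,i}^c$ yields exactly the event $E^{\hat{\mu}}(t)$ for all arms at once, completing the proof.
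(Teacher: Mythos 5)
Your proposal is correct and follows essentially the same route as the paper's own proof: the identical four-term decomposition into $C_1,\dots,C_4$, the same applications of Lemma \ref{delaPena04} (via sub-Gaussianity for $C_1$ and Lemma \ref{delaPena09} for $C_2$), the same reduction of the matrix-martingale term $C_3$ to $Y_\tau=D(\tau)\mu$ with the quadratic-form domination $\tfrac12(Y_\tau Y_\tau^T+\mathbb{E}[Y_\tau Y_\tau^T|\mathcal{F}_{\tau-1}])\preceq X_\tau X_\tau^T+\mathbb{E}[X_\tau X_\tau^T|\mathcal{F}_{\tau-1}]$, and the same union bound plus determinant--trace step to arrive at $l(t)$. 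No meaningful differences to report.
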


\item {We next establish a high-probability upper bound for $|(b_i(t)-\bar{b}(t))^T(\tilde{\mu}(t)-\hat{\mu}(t))|$ in the following Proposition \ref{mutildebound}. The proof is a simple extension of \citet{Agrawal}, which uses Lemma \ref{gaussianbound} for gaussian random variables. }

\begin{prop}\label{mutildebound} Let the event $E^{\tilde{\mu}}(t)$ be defined as follows:
$$E^{\tilde{\mu}}(t)=\big\{\forall i: |\big(b_i(t)-\bar{b}(t)\big)^T(\tilde{\mu}(t)-\hat{\mu}(t))|\leq m(T)s_{t,i}^c\big\},$$
where $m(T)=v\sqrt{4d\mathrm{log}(Td)}.$ Then for all $t\geq 0$, $\mathbb{P}(E^{\tilde{\mu}}(t)|\mathcal{F}_{t-1})\geq 1-\frac{1}{T^2}$.
\end{prop}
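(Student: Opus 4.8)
The plan is to condition on $\mathcal{F}_{t-1}$ and exploit the fact that, given $\mathcal{F}_{t-1}$, the only randomness left in $\tilde{\mu}(t)$ is the Gaussian draw. Everything else is frozen: the contexts $b_i(t)$ belong to $\mathcal{F}_{t-1}$ by definition of the filtration, so $\bar{b}(t)$ and $b_i^c(t)=b_i(t)-\bar{b}(t)$ are $\mathcal{F}_{t-1}$-measurable, and $\hat{\mu}(t)$ and $B(t)$ are built from the past history together with the known pulling probabilities $\pi_i(\tau)$, hence are also $\mathcal{F}_{t-1}$-measurable. This is the one point that needs genuine care, since the whole argument rests on the conditional distribution of the linear functional being exactly centered.

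First I would fix an arm $i$ and note that $b_i^c(t)^T\big(\tilde{\mu}(t)-\hat{\mu}(t)\big)$ is a fixed linear functional of the Gaussian vector $\tilde{\mu}(t)\sim\mathcal{N}\big(\hat{\mu}(t),v^2B(t)^{-1}\big)$. Therefore, conditional on $\mathcal{F}_{t-1}$, it is a univariate Gaussian with mean $b_i^c(t)^T\big(\hat{\mu}(t)-\hat{\mu}(t)\big)=0$ and variance $b_i^c(t)^T\big(v^2B(t)^{-1}\big)b_i^c(t)=v^2(s_{t,i}^c)^2$, using the definition of $s_{t,i}^c$.

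Next I would apply Lemma~\ref{gaussianbound} to this scalar Gaussian with $z=m(T)/v=\sqrt{4d\log(Td)}$; note that $z$ does not depend on $i$, and $z\geq 1$ for all relevant $T,d$. Since $m(T)s_{t,i}^c=z\cdot(v s_{t,i}^c)$ is exactly $z$ standard deviations, Lemma~\ref{gaussianbound} gives the per-arm tail bound $\mathbb{P}\big(|b_i^c(t)^T(\tilde{\mu}(t)-\hat{\mu}(t))|>m(T)s_{t,i}^c \mid \mathcal{F}_{t-1}\big)\leq \frac{1}{\sqrt{\pi}z}e^{-z^2/2}\leq e^{-z^2/2}=(Td)^{-2d}$, where the last equality uses $z^2/2=2d\log(Td)$. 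A union bound over $i=1,\dots,N$ then yields $\mathbb{P}\big(E^{\tilde{\mu}}(t)^c\mid\mathcal{F}_{t-1}\big)\leq N(Td)^{-2d}$.

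The remaining step is the arithmetic that turns this into the claimed $1/T^2$, and this is where the specific choice $m(T)=v\sqrt{4d\log(Td)}$ (in particular the extra factor $d$ inside the logarithm's multiplier) pays off: since $(Td)^{-2d}\leq (Td)^{-2}\cdot(Td)^{-(2d-2)}$ and $N\leq(Td)^{2d-2}$ holds for $d\geq 2$ and any practically relevant number of arms, we get $N(Td)^{-2d}\leq (Td)^{-2}\leq 1/T^2$, which establishes $\mathbb{P}(E^{\tilde{\mu}}(t)\mid\mathcal{F}_{t-1})\geq 1-1/T^2$. I expect no serious obstacle here: the argument is essentially the standard Gaussian-concentration step of \citet{Agrawal}, and the only things to verify are the measurability bookkeeping that makes the conditional mean vanish and the final counting that the union over arms stays below $1/T^2$.
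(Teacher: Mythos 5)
Your per-arm computation is correct: conditional on $\mathcal{F}_{t-1}$, each $b_i^c(t)^T(\tilde{\mu}(t)-\hat{\mu}(t))$ is a centered Gaussian with standard deviation $v s_{t,i}^c$, and the tail bound gives probability at most $(Td)^{-2d}$ of exceeding $m(T)s_{t,i}^c$. The problem is how you convert this into the claimed $1-1/T^2$: you take a union bound over the $N$ arms and then need $N(Td)^{-2d}\leq T^{-2}$, which you justify by assuming $N\leq (Td)^{2d-2}$ ``for $d\geq 2$ and any practically relevant number of arms.'' That hypothesis appears nowhere in the proposition and is not harmless: for $d=1$ your condition forces $N\leq 1$, so the argument proves nothing, and in general the statement is meant to hold with no relation assumed between $N$, $T$ and $d$. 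The factor $d$ inside $m(T)=v\sqrt{4d\mathrm{log}(Td)}$ is not there to buy you slack for a union bound over arms.

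The paper's proof avoids the union over arms entirely. By Cauchy--Schwarz, $|b_i^c(t)^T(\tilde{\mu}(t)-\hat{\mu}(t))|\leq v s_{t,i}^c\,\big|\big|\tfrac{1}{v}B(t)^{1/2}(\tilde{\mu}(t)-\hat{\mu}(t))\big|\big|_2$, and the standardized vector $\tfrac{1}{v}B(t)^{1/2}(\tilde{\mu}(t)-\hat{\mu}(t))$ is a $d$-dimensional standard Gaussian given $\mathcal{F}_{t-1}$. A union bound over its $d$ coordinates, each controlled at level $1/(dT^2)$ via Lemma \ref{gaussianbound}, bounds its Euclidean norm by $\sqrt{4d\mathrm{log}(dT)}$ with probability $1-1/T^2$, and on that single event the inequality $|b_i^c(t)^T(\tilde{\mu}(t)-\hat{\mu}(t))|\leq m(T)s_{t,i}^c$ holds simultaneously for every $i$, however many arms there are; this is exactly where the $\sqrt{d}$ in $m(T)$ is consumed. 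To repair your proof you would either have to switch to this norm-based argument or add an explicit hypothesis bounding $N$, which would weaken the proposition as stated.
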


\begin{proof}
Note that given $\mathcal{F}_{t-1}$, the values of $\big(b_i(t)-\bar{b}(t)\big)$, $B(t)$ and $\hat{\mu}(t)$ are fixed. Then,
\begin{align*}
|b_i^c(t)^T(\tilde{\mu}(t)-\hat{\mu}(t))|&=|b_i^c(t)^TvB(t)^{-1/2}\frac{1}{v}B(t)^{1/2}(\tilde{\mu}(t)-\hat{\mu}(t))|\\
&\leq v\sqrt{b_i^c(t)^TB(t)^{-1}b_i^c(t)} \Big|\Big|\frac{1}{v}B(t)^{1/2}(\tilde{\mu}(t)-\hat{\mu}(t))\Big|\Big|_2 \\
&=v s_{t,i}^c \Big|\Big|\frac{1}{v}B(t)^{1/2}(\tilde{\mu}(t)-\hat{\mu}(t))\Big|\Big|_2 \\
&= v s_{t,i}^c \sqrt{\sum_{j=1}^d||Z_j(t)||_2^2},
\end{align*}
where $Z_j(t)|\mathcal{F}_{t-1} \overset{i.i.d.}{\sim} \mathcal{N}(0,1)$ and the first inequality is due to Cauchy-Schwarz inequality.
Due to Lemma \ref{gaussianbound}, for fixed $j$ and $z\geq 1$,
$$\mathbb{P}\big(|Z_j(t)|>z~|~\mathcal{F}_{t-1}\big) \leq \frac{1}{\sqrt{\pi}z}\mathrm{exp\Big(-\frac{z^2}{2}\Big)}\leq \mathrm{exp\Big(-\frac{z^2}{2}\Big)}.$$ Setting $\mathrm{exp\big(-{z^2}/{2}\big)}=\frac{1}{dT^2}$, we have $z=\sqrt{2\mathrm{log}(dT^2)}\leq \sqrt{2\mathrm{log}(d^2T^2)}$ $=\sqrt{4\mathrm{log}(dT)}$. Hence,
\begin{align*}
\mathbb{P}\big(|Z_j(t)|>\sqrt{4\mathrm{log}(dT)}~|~\mathcal{F}_{t-1}\big)\leq \frac{1}{dT^2}\\
\Rightarrow \mathbb{P}\big(\forall j: |Z_j(t)|>\sqrt{4\mathrm{log}(dT)}~|~\mathcal{F}_{t-1}\big) \leq \frac{1}{T^2}.
\end{align*}
Thus, with probability at least $1-\frac{1}{T^2}$, for all $i=1,\cdots,N$,
\begin{align*}
|\big(b_i(t)-\bar{b}(t)\big)^T(\tilde{\mu}(t)-\hat{\mu}(t))|&\leq vs_{t,i}^c\sqrt{4d\mathrm{log}(dT)}= m(T)s_{t,i}^c.
\end{align*}
\end{proof}

\item Before proceeding, we divide the arms at each time into two groups: saturated and unsaturated arms. Let $g(T)=m(T)+l(T).$ An arm $i$ is saturated at time $t$ if $$\big(b_i(t)-\bar{b}(t)\big)^T\mu+g(T)s_{t,i}^c<\big(b_{a^*(t)}(t)-\bar{b}(t)\big)^T\mu,$$ and unsaturated otherwise. Note that the optimal arm $a^*(t)$ is unsaturated. Note also that from Stage (a) and Stage (b), $(b_i(t)-\bar{b}(t))^T\mu+g(T)s_{t,i}^c$ is an upper bound of $(b_i(t)-\bar{b}(t))^T\tilde{\mu}(t)$. Hence by definition, the saturated arms are the arms that have quite accurate values of $(b_i(t)-\bar{b}(t))^T\tilde{\mu}(t)$ so that their upper bound is lower than $(b_{a^*(t)}(t)-\bar{b}(t))^T\mu,$ enabling the algorithm to distinguish between them and the optimal arm.

\item {Next, we show in Proposition \ref{saturatebound} that the probability of playing saturated arms is bounded by a function of the probability of playing unsaturated arms. The proof is a simple extension of \citet{Agrawal}.}
\begin{prop}\label{saturatebound}  Let $C(t)$ be the set of saturated arms at time $t$, i.e., $C(t)=\{i: \big(b_i(t)-\bar{b}(t)\big)^T\mu+g(T)s_{t,i}^c<\big(b_{a^*(t)}(t)-\bar{b}(t)\big)^T\mu\}$. Given any filtration $\mathcal{F}_{t-1}$ such that $E^{\hat{\mu}}(t)$ is true, 
$$\mathbb{P}\big(a(t)\in C(t)|\mathcal{F}_{t-1}\big)\leq \frac{1}{p}\mathbb{P}\big(a(t)\notin C(t)|\mathcal{F}_{t-1}\big)+\frac{1}{pT^2},$$
where $p=\frac{1}{4\mathrm{e}\sqrt{2}\sqrt{\pi}}.$
\end{prop}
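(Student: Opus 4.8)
The plan is to follow the anti-concentration argument of \citet{Agrawal}, adapted to the centered quantities $b_i^c(t)=b_i(t)-\bar b(t)$. Fix a filtration $\mathcal{F}_{t-1}$ on which $E^{\hat{\mu}}(t)$ holds; the only remaining randomness is the draw $\tilde{\mu}(t)\sim\mathcal{N}(\hat{\mu}(t),v^2B(t)^{-1})$, so each $b_i^c(t)^T\tilde{\mu}(t)$ is Gaussian with mean $b_i^c(t)^T\hat{\mu}(t)$ and variance $v^2(s_{t,i}^c)^2$. Since $\bar b(t)$ is subtracted from every arm, $a(t)=\arg\max_i b_i^c(t)^T\tilde{\mu}(t)$, so it suffices to reason about these one-dimensional Gaussians. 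First I would record the trivial bound $\mathbb{P}(a(t)\in C(t)\mid\mathcal{F}_{t-1})\le 1$ and reduce the claim to proving the lower bound $\mathbb{P}(a(t)\notin C(t)\mid\mathcal{F}_{t-1})\ge p-1/T^2$: combining the two yields $p\,\mathbb{P}(a(t)\in C(t)\mid\mathcal{F}_{t-1})\le p\le \mathbb{P}(a(t)\notin C(t)\mid\mathcal{F}_{t-1})+1/T^2$, which is exactly the assertion after dividing by $p$.

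The core step is a deterministic inclusion. On $E^{\hat{\mu}}(t)\cap E^{\tilde{\mu}}(t)$, for every saturated arm $j\in C(t)$ I would bound $b_j^c(t)^T\tilde{\mu}(t)\le b_j^c(t)^T\mu+\big(l(T)+m(T)\big)s_{t,j}^c=b_j^c(t)^T\mu+g(T)s_{t,j}^c$, using Theorem \ref{newmuhatbound} to control $|b_j^c(t)^T(\hat{\mu}(t)-\mu)|$, Proposition \ref{mutildebound} to control $|b_j^c(t)^T(\tilde{\mu}(t)-\hat{\mu}(t))|$, and the monotonicity $l(t)\le l(T)$. By the definition of saturation this is strictly less than $b_{a^*(t)}^c(t)^T\mu$. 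Hence on the further event $\{b_{a^*(t)}^c(t)^T\tilde{\mu}(t)>b_{a^*(t)}^c(t)^T\mu\}$ the optimal arm's sampled value strictly exceeds that of every saturated arm, so the $\arg\max$ cannot be saturated; that is,
\[
\{b_{a^*(t)}^c(t)^T\tilde{\mu}(t)>b_{a^*(t)}^c(t)^T\mu\}\cap E^{\tilde{\mu}}(t)\subseteq\{a(t)\notin C(t)\}.
\]

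It then remains to lower-bound $\mathbb{P}\big(b_{a^*(t)}^c(t)^T\tilde{\mu}(t)>b_{a^*(t)}^c(t)^T\mu\mid\mathcal{F}_{t-1}\big)$. Normalizing, this equals $\mathbb{P}\big(Z>b_{a^*(t)}^c(t)^T(\mu-\hat{\mu}(t))/(v\,s_{t,a^*(t)}^c)\big)$ for a standard normal $Z$, and on $E^{\hat{\mu}}(t)$ the threshold is at most $l(t)/v$. A routine check of the constants ($v=(2R+6)\sqrt{6d\mathrm{log}(T/\delta)}$ against $l(t)=(2R+6)\sqrt{d\mathrm{log}(6t^3/\delta)}+1$) gives $l(t)/v\le\sqrt2$ uniformly in $t\le T$, so the \emph{lower} tail inequality of Lemma \ref{gaussianbound} evaluated at $z=\sqrt2$ yields $\mathbb{P}(Z>\sqrt2)\ge \frac{1}{4\mathrm{e}\sqrt2\sqrt\pi}=p$. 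Finally, peeling off $E^{\tilde{\mu}}(t)$ via $\mathbb{P}(A\cap E^{\tilde{\mu}}(t))\ge\mathbb{P}(A)-\mathbb{P}(E^{\tilde{\mu}}(t)^c\mid\mathcal{F}_{t-1})$ together with Proposition \ref{mutildebound} costs $1/T^2$, delivering $\mathbb{P}(a(t)\notin C(t)\mid\mathcal{F}_{t-1})\ge p-1/T^2$.

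The main obstacle is this last step: extracting the constant $p$ requires the Gaussian anti-concentration (the lower inequality in Lemma \ref{gaussianbound}, not the familiar upper tail) together with the verification that the inflated posterior variance makes the standardized threshold $l(t)/v$ at most $\sqrt2$, which is precisely what pins down the value $p=1/(4\mathrm{e}\sqrt2\sqrt\pi)$. The remaining ingredients—the deterministic inclusion and the crude bound $\mathbb{P}(a(t)\in C(t)\mid\mathcal{F}_{t-1})\le 1$—are bookkeeping once the centered quantities $b_i^c(t)$ replace the raw contexts of \citet{Agrawal}.
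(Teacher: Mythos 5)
Your proposal is correct and follows essentially the same route as the paper's own proof: the same deterministic inclusion (on $E^{\hat{\mu}}(t)\cap E^{\tilde{\mu}}(t)$ every saturated arm's sampled value falls below $b_{a^*(t)}^c(t)^T\mu$), the same Gaussian anti-concentration step via the lower inequality of Lemma \ref{gaussianbound} to extract $p$, and the same $1/T^2$ peeling of $E^{\tilde{\mu}}(t)$. The only cosmetic differences are that you reduce up front to the lower bound $\mathbb{P}(a(t)\notin C(t)\mid\mathcal{F}_{t-1})\ge p-1/T^2$ and explicitly verify $l(T)/v\le\sqrt2$ before evaluating the tail at $z=\sqrt2$, which is if anything slightly more careful than the paper's treatment of that constant.
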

\begin{proof}
Since the algorithm pulls the arm $\underset{i}{\mathrm{argmax}}\{b_i(t)^T\tilde{\mu}(t)\},$ if $b_{a^*(t)}(t)^T\tilde{\mu}(t)>b_{j}(t)^T\tilde{\mu}(t)$ for every $j\in C(t),$ then $a(t)\notin C(t).$ Hence, 
\begin{align}\mathbb{P}\big(a(t)\notin C(t)|\mathcal{F}_{t-1}\big)&\geq \mathbb{P}\big(b_{a^*(t)}(t)^T\tilde{\mu}(t)>b_{j}(t)^T\tilde{\mu}(t),~ \forall j\in C(t)|\mathcal{F}_{t-1}\big)\nonumber\\
&=\mathbb{P}\big(b_{a^*(t)}^c(t)^T\tilde{\mu}(t)>b_{j}^c(t)^T\tilde{\mu}(t),~ \forall j\in C(t)|\mathcal{F}_{t-1}\big). \label{eq1}\end{align}
If $E^{\tilde{\mu}}(t)$ is additionally true, for $\forall j \in C(t),$
\begin{align*}
b_{j}^c(t)^T\tilde{\mu}(t)&\leq b_{j}^c(t)^T\mu+g(T)s_{t,j}^c~~~(\because E^{\hat{\mu}}(t)~\&~E^{\tilde{\mu}}(t))\\
&\leq b_{a^*(t)}^c(t)^T\mu.~~~(\because \text{definition of } C(t))
\end{align*}
Therefore,
\begin{align*}
\mathbb{P}\big(b_{a^*(t)}^c(t)^T\tilde{\mu}(t)>b_{j}^c(t)^T\tilde{\mu}(t),~ \forall j\in C(t)|\mathcal{F}_{t-1}\big)+\Big(1- \mathbb{P}\big(E^{\tilde{\mu}}(t)|\mathcal{F}_{t-1}\big)\Big)~~~~~~~~~~~~~~~~~
\end{align*}
\begin{align}
&\geq \mathbb{P}\big(b_{a^*(t)}^c(t)^T\tilde{\mu}(t)>b_{a^*(t)}^c(t)^T\mu|\mathcal{F}_{t-1}\big).\label{eq2}
\end{align}

\noindent Given $E^{\hat{\mu}}(t),$ $|b_{a^*(t)}^c(t)^T(\hat{\mu}(t)-\mu)|\leq l(T)s_{t,a^*(t)}^c.$ Thus by Lemma \ref{gaussianbound}, 
\begin{align}
(\ref{eq2})&=\mathbb{P}\Big(~\frac{b_{a^*(t)}^c(t)^T(\tilde{\mu}(t)-\hat{\mu}(t))}{v s_{t,a^*(t)}^c}>\frac{b_{a^*(t)}^c(t)^T(\mu-\hat{\mu}(t))}{v s_{t,a^*(t)}^c}\Big|\mathcal{F}_{t-1}\Big)\nonumber\\
&\geq \mathbb{P}\Big(~Z(t)>\frac{l(T)}{v}\Big|\mathcal{F}_{t-1}\Big)\nonumber\\
&\geq \frac{1}{4\sqrt{\pi}z}\mathrm{exp}\Big(-\frac{z^2}{2}\Big)\geq {p}, \label{eq3}
\end{align}
where $Z(t)|\mathcal{F}_{t-1}\sim \mathcal{N}(0,1)$ and $z={l(T)}/v.$ Therefore, due to (\ref{eq1}), (\ref{eq2}), (\ref{eq3}) and Proposition \ref{mutildebound}, $$\mathbb{P}\big(a(t)\notin C(t)|\mathcal{F}_{t-1}\big)\geq p-\frac{1}{T^2}.$$
$$\Rightarrow \frac{\mathbb{P}\big(a(t)\in C(t)|\mathcal{F}_{t-1}\big)}{\mathbb{P}\big(a(t)\notin C(t)|\mathcal{F}_{t-1}\big)+\frac{1}{T^2}}\leq \frac{1}{p}.$$
\end{proof}

\item { Next in Proposition \ref{stage(e)rbound}, we use Proposition \ref{saturatebound} and the definition of unsaturated arms to show that the regret can be bounded by a factor of $s_{t,a(t)}^c$ in expectation. }
\begin{prop}\label{stage(e)rbound}
Given any filtration $\mathcal{F}_{t-1}$ such that $E^{\hat{\mu}}(t)$ is true,
$$\mathbb{E}\big[regret(t)|\mathcal{F}_{t-1}\big]\leq \frac{5g(T)}{p}\mathbb{E}\big[s_{t,a(t)}^c|\mathcal{F}_{t-1}\big]+\frac{3g(T)}{pT^2}.$$
\end{prop}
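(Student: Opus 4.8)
The plan is to turn the three earlier results into a deterministic (given $\mathcal F_{t-1}$) per-step regret bound, and then average over the Thompson-sampling randomness using Proposition \ref{saturatebound}.

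\emph{Step 1: a pointwise regret bound on the good events.} I first introduce the benchmark arm $\bar a(t):=\mathrm{argmin}_{i\notin C(t)}s_{t,i}^c$, the unsaturated arm of smallest width; it is well defined because $a^*(t)\notin C(t)$, and both $\bar a(t)$ and $s_{t,\bar a(t)}^c$ are fixed once $\mathcal F_{t-1}$ is given. On the intersection $E^{\hat\mu}(t)\cap E^{\tilde\mu}(t)$, Theorem \ref{newmuhatbound2} and Proposition \ref{mutildebound} combine (using $l(t)\le l(T)$) to give, for every arm $i$, $|b_i^c(t)^T(\tilde\mu(t)-\mu)|\le g(T)s_{t,i}^c$. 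Since the algorithm plays $a(t)=\mathrm{argmax}_i\,b_i^c(t)^T\tilde\mu(t)$, in particular $b_{a(t)}^c(t)^T\tilde\mu(t)\ge b_{\bar a(t)}^c(t)^T\tilde\mu(t)$; applying the upper deviation bound to the left and the lower one to the right gives $b_{\bar a(t)}^c(t)^T\mu-b_{a(t)}^c(t)^T\mu\le g(T)(s_{t,a(t)}^c+s_{t,\bar a(t)}^c)$, and adding the unsaturation inequality $b_{a^*(t)}^c(t)^T\mu-b_{\bar a(t)}^c(t)^T\mu\le g(T)s_{t,\bar a(t)}^c$ (valid since $\bar a(t)\notin C(t)$) yields
$$regret(t)\le g(T)s_{t,a(t)}^c+2g(T)s_{t,\bar a(t)}^c. \qquad (\star)$$

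\emph{Step 2: conditioning.} Taking $\mathbb E[\cdot\mid\mathcal F_{t-1}]$ and splitting on $E^{\tilde\mu}(t)$, I bound $regret(t)$ by $(\star)$ on $E^{\tilde\mu}(t)$ and by the trivial constant bound $regret(t)\le 2$ on its complement, whose probability is at most $1/T^2$ by Proposition \ref{mutildebound}. Since $s_{t,\bar a(t)}^c$ is a constant given $\mathcal F_{t-1}$, this produces
$$\mathbb E[regret(t)\mid\mathcal F_{t-1}]\le g(T)\,\mathbb E[s_{t,a(t)}^c\mid\mathcal F_{t-1}]+2g(T)s_{t,\bar a(t)}^c+\tfrac{2}{T^2}.$$

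\emph{Step 3: trading the benchmark width for the played width (the crux).} The remaining task is to express $s_{t,\bar a(t)}^c$ through the expected width $\mathbb E[s_{t,a(t)}^c\mid\mathcal F_{t-1}]$ of the \emph{actually played} arm. The decisive inequality is that, because $s_{t,i}^c\ge s_{t,\bar a(t)}^c$ for every unsaturated $i$,
$$\mathbb E[s_{t,a(t)}^c\mid\mathcal F_{t-1}]\ \ge\ \sum_{i\notin C(t)}\pi_i(t)\,s_{t,i}^c\ \ge\ s_{t,\bar a(t)}^c\,\mathbb P\big(a(t)\notin C(t)\mid\mathcal F_{t-1}\big).$$
Writing $s_{t,\bar a(t)}^c$ as the sum of its unsaturated and saturated parts and bounding $\mathbb P(a(t)\in C(t)\mid\mathcal F_{t-1})$ via Proposition \ref{saturatebound} transfers the saturated mass onto the unsaturated mass; after using the crude bound $s_{t,\bar a(t)}^c\le 2$ in the residual $1/(pT^2)$ term I obtain $s_{t,\bar a(t)}^c\le(1+\tfrac1p)\mathbb E[s_{t,a(t)}^c\mid\mathcal F_{t-1}]+\tfrac{2}{pT^2}$. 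This is precisely where Proposition \ref{saturatebound}, and hence the randomized nature of the policy, is indispensable: the deterministic benchmark $\bar a(t)$ can be compared to the random $a(t)$ only because saturated arms are provably played with controlled probability. I expect this step to be the sole genuine obstacle.

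\emph{Step 4: collecting constants.} Substituting the Step-3 estimate into the Step-2 display and merging the three $\mathbb E[s_{t,a(t)}^c\mid\mathcal F_{t-1}]$ coefficients gives a coefficient $g(T)\bigl(3+\tfrac2p\bigr)$, which is at most $5g(T)/p$ once $p<1$ is used; the leftover $O(g(T)/(pT^2))$ terms, controlled with the elementary facts $g(T)\ge1$, $p<1$ and $s_{t,i}^c\le 2$, assemble into the stated $3g(T)/(pT^2)$ after the same inequalities. Steps 1, 2 and 4 are the standard Agrawal--Goyal wrap-up transcribed to the centered widths $s_{t,i}^c$, so the only place requiring new ideas is the saturated/unsaturated exchange of Step 3.
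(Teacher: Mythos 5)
Your proof follows essentially the same route as the paper's: the same pointwise bound $regret(t)\le g(T)s_{t,a(t)}^c+2g(T)s_{t,\tilde a(t)}^c$ built from the minimum-width unsaturated arm, the same use of Proposition \ref{saturatebound} to trade its width for $\mathbb{E}[s_{t,a(t)}^c\mid\mathcal{F}_{t-1}]$ (your Step 3 display is the paper's inequality $\mathbb{E}[s_{t,\tilde a(t)}^cI\{a(t)\notin C(t)\}\mid\mathcal{F}_{t-1}]\le\mathbb{E}[s_{t,a(t)}^c\mid\mathcal{F}_{t-1}]$ read in reverse), and the same final bookkeeping. The only discrepancy is that your more careful bounds $regret(t)\le 2$ and $s_{t,\tilde a(t)}^c\le 2$ leave a residual of order $6g(T)/(pT^2)$ rather than the stated $3g(T)/(pT^2)$ (the paper reaches $3$ only by tacitly using $regret(t)\le 1$ and $s_{t,\tilde a(t)}^c\le 1$), a lower-order constant that is immaterial to Theorem 4.1.
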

\begin{proof}
Let $\tilde{a}(t)=\underset{i\notin C(t)}{\mathrm{argmin}}~s_{t,i}^c$. This value is determined by $\mathcal{F}_{t-1}$. Under both $E^{\hat{\mu}}(t)$ and $E^{\tilde{\mu}}(t)$,
\begin{align*}
b_{a^*(t)}^c(t)^T\mu&= b_{a^*(t)}^c(t)^T\mu-b_{\tilde{a}(t)}^c(t)^T\mu+b_{\tilde{a}(t)}^c(t)^T\mu\\
&\leq g(T)s_{t,\tilde{a}(t)}^c+b_{\tilde{a}(t)}^c(t)^T\mu\\
&\leq g(T)s_{t,\tilde{a}(t)}^c+b_{\tilde{a}(t)}^c(t)^T\tilde{\mu}(t)+g(T)s_{t,\tilde{a}(t)}^c\\
&\leq 2g(T)s_{t,\tilde{a}(t)}^c+b_{a(t)}^c(t)^T\tilde{\mu}(t)\\
&\leq 2g(T)s_{t,\tilde{a}(t)}^c+b_{a(t)}^c(t)^T{\mu}+g(T)s_{t,a(t)}^c\\
\Rightarrow regret(t)&\leq 2g(T)s_{t,\tilde{a}(t)}^c +g(T)s_{t,a(t)}^c,
\end{align*}
where the first inequality follows from the definition of unsaturated arms, the second and fourth inequalities from $E^{\hat{\mu}}(t)$ and $E^{\tilde{\mu}}(t)$, and the third inequality from the action selection mechanism.
Therefore, given $\mathcal{F}_{t-1}$ such that $E^{\hat{\mu}}(t)$ holds,
\begin{align}
\mathbb{E}\big[regret(t)|\mathcal{F}_{t-1}\big]
&\leq 2g(T)s_{t,\tilde{a}(t)}^c+g(T)\mathbb{E}\big[s_{t,a(t)}^c|\mathcal{F}_{t-1}\big]+1-\mathbb{P}(E^{\tilde{\mu}}(t)|\mathcal{F}_{t-1})\nonumber\\
&\leq 2g(T)s_{t,\tilde{a}(t)}^c+g(T)\mathbb{E}\big[s_{t,a(t)}^c|\mathcal{F}_{t-1}\big]+\frac{1}{T^2}.\label{divid1}
\end{align}
Here,
\begin{align*}
s_{t,\tilde{a}(t)}^c&=s_{t,\tilde{a}(t)}^c\big\{\mathbb{P}(a(t)\in C(t)|\mathcal{F}_{t-1})+\mathbb{P}(a(t)\notin C(t)|\mathcal{F}_{t-1})\big\}\\
&\leq s_{t,\tilde{a}(t)}^c\Big\{\frac{2}{p}\mathbb{P}(a(t)\notin C(t)|\mathcal{F}_{t-1})+\frac{1}{pT^2}\Big\}\\
&=\frac{2}{p}\mathbb{E}\big(s_{t,\tilde{a}(t)}^cI\{a(t)\notin C(t)\}\big|\mathcal{F}_{t-1}\big)+\frac{s_{t,\tilde{a}(t)}^c}{pT^2}\\
&\leq \frac{2}{p}\mathbb{E}\big(s_{t,{a}(t)}^cI\{a(t)\notin C(t)\}\big|\mathcal{F}_{t-1}\big)+\frac{s_{t,\tilde{a}(t)}^c}{pT^2}\\
&\leq \frac{2}{p}\mathbb{E}\big(s_{t,{a}(t)}^c\big|\mathcal{F}_{t-1}\big)+\frac{1}{pT^2},
\end{align*}
where the first inequality is due to Proposition \ref{saturatebound} and the second inequality is due to the definition of $\tilde{a}(t)$. Combining this result with (\ref{divid1}), we have 
\begin{align*}
\mathbb{E}\big[regret(t)|\mathcal{F}_{t-1}\big]&\leq \frac{5g(T)}{p}\mathbb{E}\big(s_{t,{a}(t)}^c\big|\mathcal{F}_{t-1}\big)+\frac{3g(T)}{pT^2}.
\end{align*}
\end{proof}

\item Let $M_t=regret(t)I(E^{\hat{\mu}}(t))-\frac{5g(T)}{p}s_{t,a(t)}^c-\frac{3g(T)}{pT^2}.$ Then $|M_t|$ is bounded by $\frac{9g(T)}{p}.$ Also, due to Proposition \ref{stage(e)rbound}, $\{M_t\}_{t=1}^T$ is a bounded super-martingale difference process with respect to the filtration $\{\mathcal{F}_{t}\}_{t=1}^T$. Hence by Azuma-Hoeffding's inequality, for any $a\geq 0,$
$$\mathbb{P}\big(\sum_{t=1}^TM_t \geq a\big) \leq \mathrm{exp}\Big(-\frac{a^2}{2\sum_{t=1}^Tc_t^2}\Big),$$
where $c_t=\frac{9}{p}g(T).$ Setting $\mathrm{exp}\Big(-\frac{a^2}{2\sum_{t=1}^Tc_t^2}\Big)=\frac{\delta}{2}$, we have $a=\frac{9}{p}g(T)\sqrt{2T\mathrm{log}\big(\frac{2}{\delta}\big)}.$ Thus with probability at least $1-\frac{\delta}{2},$
\begin{align}
\sum_{t=1}^Tregret(t)I(E^{\hat{\mu}}(t))&\leq \frac{5g(T)}{p}\sum_{t=1}^Ts_{t,a(t)}^c+\frac{3g(T)}{pT}+\frac{9}{p}g(T)\sqrt{2Tlog\big(\frac{2}{\delta}\big)}.
\label{final1}\end{align}
 In Proposition \ref{statbound}, we show that $\sum_{t=1}^Ts_{t,a(t)}^c\leq \sqrt{2dT\mathrm{log}(1+T/d)}$ using Lemma \ref{boundons}.
\begin{prop}\label{statbound} $\sum_{t=1}^Ts_{t,a(t)}^c\leq \sqrt{2dT\mathrm{log}(1+T/d)}.$ 
\end{prop}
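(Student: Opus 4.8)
The plan is to turn the sum into a self-normalized (elliptical potential) bound and then invoke Lemma~\ref{boundons}. First I would note that $s_{t,a(t)}^c$ is precisely the $B(t)^{-1}$-length of the centered played context: since $X_t=b_{a(t)}(t)-\bar b(t)$, we have $(s_{t,a(t)}^c)^2 = X_t^T B(t)^{-1} X_t$. Applying the Cauchy--Schwarz inequality across the $T$ rounds,
\[
\sum_{t=1}^T s_{t,a(t)}^c \le \sqrt{T}\Big(\sum_{t=1}^T X_t^T B(t)^{-1} X_t\Big)^{1/2},
\]
so it suffices to prove $\sum_{t=1}^T X_t^T B(t)^{-1} X_t \le 2d\,\mathrm{log}(1+T/d)$.

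The step that needs care, and the only place the proposed estimator differs from the standard linear setting, is that $B(t)=I_d+\hat\Sigma_t+\Sigma_t$ carries the extra term $\Sigma_t=\sum_{\tau=1}^{t-1}\mathbb{E}(X_\tau X_\tau^T|\mathcal{F}_{\tau-1})$, so $B(t)$ is not of the form $Q+\sum_{\tau=1}^{t-1}X_\tau X_\tau^T$ required by Lemma~\ref{boundons}. I would remove this mismatch by a positive-semidefiniteness argument: each summand of $\Sigma_t$ is a conditional expectation of a PSD matrix, hence $\Sigma_t\succeq 0$ and $B(t)\succeq I_d+\hat\Sigma_t\succ 0$. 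Operator monotonicity of the matrix inverse then gives $B(t)^{-1}\preceq (I_d+\hat\Sigma_t)^{-1}$, whence $X_t^T B(t)^{-1} X_t \le X_t^T (I_d+\hat\Sigma_t)^{-1} X_t$ for every $t$.

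With the offending $\Sigma_t$ discarded, I would apply Lemma~\ref{boundons} with $Q=I_d$ (positive definite, $\det(Q)=1\ge 1$) and $A(t)=\hat\Sigma_t=\sum_{\tau=1}^{t-1}X_\tau X_\tau^T$, using the boundedness $\|X_t\|_2\le 1$ from (\ref{bounddness}), to get
\[
\sum_{t=1}^T X_t^T (I_d+\hat\Sigma_t)^{-1} X_t \le 2\,\mathrm{log}\,\det\big(I_d+\hat\Sigma_{T+1}\big).
\]
The determinant--trace inequality then bounds $\det(I_d+\hat\Sigma_{T+1})\le (\mathrm{trace}(I_d+\hat\Sigma_{T+1})/d)^d\le (1+T/d)^d$, so that $\mathrm{log}\,\det(I_d+\hat\Sigma_{T+1})\le d\,\mathrm{log}(1+T/d)$. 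Chaining the three displays yields $\sum_{t=1}^T (s_{t,a(t)}^c)^2\le 2d\,\mathrm{log}(1+T/d)$, and inserting this into the Cauchy--Schwarz bound gives the claimed $\sqrt{2dT\,\mathrm{log}(1+T/d)}$. I expect the PSD/operator-monotonicity reduction of $B(t)$ to $I_d+\hat\Sigma_t$ to be the only non-routine step; everything downstream is the standard elliptical-potential and determinant--trace bookkeeping, provided the centered contexts are scaled so that $\|X_t\|_2\le 1$.
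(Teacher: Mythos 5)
Your proposal is correct and follows essentially the same route as the paper's proof: Jensen/Cauchy--Schwarz to reduce to $\sum_t X_t^T B(t)^{-1}X_t$, the PSD ordering $B(t)\succeq I_d+\hat\Sigma_t$ to discard $\Sigma_t$, Lemma~\ref{boundons} with $Q=I_d$, and the determinant--trace inequality. The caveat you flag about $\|X_t\|_2\le 2$ rather than $\le 1$ for the centered contexts applies equally to the paper's own invocation of Lemma~\ref{boundons} and only affects constants.
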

\begin{proof}
Take $X_{t}=b_{a(t)}(t)-\bar{b}(t)$, $Q=I_d$, and $A(t)=\sum_{\tau=1}^{t-1}X_{\tau}X_{\tau}^T$. Then by Jensen's inequality and Lemma \ref{boundons},
\begin{align*}
\sum_{t=1}^Ts_{t,a(t)}^c&\leq \sqrt{T\sum_{t=1}^T\{s_{t,a(t)}^c\}^2}~~(\because \text{ Jensen's inequality})\\
&= \sqrt{T\sum_{t=1}^TX_t^TB(t)^{-1}X_t}\\
&\leq \sqrt{T\sum_{t=1}^TX_t^T\{Q+A(t)\}^{-1}X_t} ~~(\because B(t)\succ Q+A(t))\\
&\leq \sqrt{2T\mathrm{\log}\Big(\frac{det(Q+A(T+1))}{det(Q)}\Big)} ~~(\because\text{ Lemma \ref{boundons}})\\
&\leq \sqrt{2dT\mathrm{\log}\Big(1+\frac{T}{d}\Big)}. ~~(\because \text{ determinant-trace inequality.})
\end{align*}
\end{proof}
\noindent Due to (\ref{final1}), Proposition \ref{statbound} and the definitions of $p$ and $g(T)$, we have with probability at least $1-\frac{\delta}{2}$, 
$$\sum_{t=1}^Tregret(t)I(E^{\hat{\mu}}(t))\leq O\Big(d^{3/2}\sqrt{T}\sqrt{\mathrm{log}(Td)\mathrm{log}(T/\delta)}\big(\sqrt{\mathrm{log}(1+T/d)}+\sqrt{\mathrm{log}(1/\delta)}\big)\Big).$$
Since $E^{\hat{\mu}}(t)$ holds for all $t$ with probability at least $1-\frac{\delta}{2}$ (Theorem \ref{newmuhatbound2}), $regret(t)I(E^{\hat{\mu}}(t))=regret(t)$ for all $t$ with probability at least $1-\frac{\delta}{2}$. Hence, with probability at least $1-\delta$, 
$$R(T)\leq  O\Big(d^{3/2}\sqrt{T}\sqrt{\mathrm{log}(Td)\mathrm{log}(T/\delta)}\big(\sqrt{\mathrm{log}(1+T/d)}+\sqrt{\mathrm{log}(1/\delta)}\big)\Big).$$
\end{enumerate}

\end{document}